\documentclass[11pt]{article}

\usepackage[margin=1in]{geometry}
\usepackage{amsmath,amssymb,amsthm}
\usepackage{algorithm}
\usepackage{algpseudocode}
\usepackage{booktabs}
\usepackage{graphicx}
\usepackage{xcolor}
\usepackage[numbers,sort&compress]{natbib}
\usepackage[colorlinks=true,linkcolor=teal,citecolor=teal,urlcolor=teal]{hyperref}

\newtheorem{theorem}{Theorem}
\newtheorem{proposition}{Proposition}
\newtheorem{lemma}{Lemma}
\newtheorem{assumption}{Assumption}
\newtheorem{corollary}{Corollary}
\theoremstyle{definition}

\theoremstyle{remark}
\newtheorem{remark}{Remark}

\newcommand{\R}{\mathbb{R}}
\newcommand{\E}{\mathbb{E}}
\newcommand{\Prob}{\mathbb{P}}
\newcommand{\indic}{\mathbf{1}}
\newcommand{\sig}{\sigma}

\title{\bf Knowing When to Ask for Help:\\
Bayesian Self-Escalation in Hierarchical LLM Agents%
\thanks{\textbf{Version~1.1.} Version~1
(DOI:~\href{https://doi.org/10.5281/zenodo.21330788}{10.5281/zenodo.21330788})
reported no real-model results and pre-registered an evaluation protocol in their
place. This version adds a first execution of that protocol on a
\texttt{Qwen2.5-Coder} $1.5$B$\to$$7$B code cascade (Section~\ref{sec:real}),
confirming two of its three predictions; the theoretical results are unchanged,
and the constant-threshold instantiation tested does not exercise the
optimal-stopping dynamic program.}}
\author{Nadeem Shaikh\\[2pt]
\normalsize Independent Researcher\\
\normalsize Melbourne, Australia\\
\normalsize \texttt{nadeem@nadeemshaikh.net}}
\date{\today}

\begin{document}
\maketitle

\begin{abstract}
Current LLM agent systems decide delegation \emph{before} reasoning begins (a
router picks a model) or \emph{after} a response is complete (a verifier scores
it and may retry). We study a third regime: an agent that recognises,
\emph{during} its own reasoning, that it is unlikely to succeed and transfers
control to a stronger model. Our contribution is not the observation that agents
can defer to other agents---that is well established---but a decision-theoretic
formulation of \emph{intra-generation} delegation as an optimal-stopping problem
over an online estimate of the agent's eventual task success. The junior agent
maintains a \emph{competence posterior}: an abstract Bayesian state whose
sufficient statistics are \emph{learned} from labelled trajectories, not read
off raw entropy. It escalates when the expected cost of continuing exceeds the
expected cost of deferral. We derive the myopic escalation threshold in closed
form, characterise the optimal policy via dynamic programming, and prove---using
monotonicity alone, without any concavity argument---that under a
conditional-independence signal model the optimal policy is a time-varying
threshold on the competence posterior, with no monotone-likelihood or other
shape assumption on the raw signal. We further prove that the oracle belief
separates at the Chernoff-information rate of the signal, and give a
finite-sample guarantee: with $n$ labelled calibration trajectories of length
$T$, the deployed plug-in policy's regret over the oracle is
$O\big(Lt\sqrt{\log K/(nT)}\big)$ with high probability. Our central message is a regret bound showing that
excess cost is controlled by the \emph{calibration} of that posterior: better
calibration matters more than a smarter router, and ``confidently wrong''
predictions are the binding failure mode. A simulation study with a known
data-generating process confirms each prediction of the theory. A pre-registered
protocol yields falsifiable predictions; in this revised version we report a
first execution of that protocol on a \texttt{Qwen2.5-Coder} $1.5$B$\to$$7$B
code cascade, confirming two of its three predictions: the escalation frontier
dominates post-hoc routing at equal cost, and the cumulative competence belief's
discrimination rises over the course of generation. The theoretical results are
unchanged.
\end{abstract}

\vspace{4pt}
\noindent\textbf{Keywords:} LLM agents; hierarchical inference; model cascades;
optimal stopping; uncertainty quantification; calibration; selective
prediction; learning to defer.

\section{Introduction}

The economics of large language model (LLM) deployment increasingly favour
\emph{hierarchies}: a small, fast model handles the bulk of traffic while a
larger, more capable---and considerably more expensive---model is invoked only
when needed. Model cascades and routers realise this idea and can cut inference
cost by large factors without sacrificing quality
\citep{chen2023frugalgpt,ong2024routellm,kolawole2024agreement}. Almost all such
systems, however, make the routing decision either \emph{before} inference
(a router inspects the query and picks a model) or \emph{after} a full
generation (a verifier scores the completed output and may retry). Neither
option lets the working model notice, partway through a difficult chain of
reasoning, that it has left the region of competence and should hand off.

Many systems already let an agent hand work to another agent or to itself: 
self-reflection and critic/debate models, verifier-guided generation and
best-of-$n$ reranking, speculative decoding with fallback, and adaptive-compute
and test-time-scaling methods all revisit or reallocate computation. What is
missing across these is a \emph{principled, sequential} account of \emph{when},
mid-generation, to stop and defer. We do not claim to introduce agent
delegation. We introduce a \textbf{decision-theoretic formulation of
intra-generation delegation in which escalation is an optimal-stopping problem
over an online estimate of eventual task success}. We refer to the mechanism as
\textbf{self-escalation}.

A point we make central rather than incidental: the framework does \emph{not}
assume that raw uncertainty metrics are calibrated. High entropy is not failure
and low entropy is not correctness---a model can be confidently wrong. The
Bayesian state is therefore an \emph{abstract competence posterior} whose
sufficient statistics (the mapping from a signal history to a probability of
success) are \emph{learned from labelled trajectories}. Per-token entropy or the
next-token margin serve only as raw evidence into that learned map, never as the
posterior itself. This distinction is what makes the theory meaningful, and it
turns out to be the crux: our main result is that the method's cost is governed
by how well that posterior is calibrated.

\paragraph{Contributions.}
\begin{enumerate}
  \item \textbf{Formulation.} We cast intra-generation self-escalation as a
  Bayesian optimal-stopping problem over the junior agent's eventual success
  (Section~\ref{sec:model}), separating a \emph{learned} competence-posterior
  component from the decision component.
  \item \textbf{Policy and theory.} We derive the myopic escalation threshold in
  closed form, characterise the optimal policy via dynamic programming, and prove
  (i) that the competence posterior is a martingale, (ii) that the optimal
  policy is a time-varying threshold---proved \emph{via monotonicity alone, with
  no concavity argument and no monotone-likelihood assumption on the raw
  signal}---(iii) a
  regret bound linking excess cost to posterior miscalibration, (iv) exponential
  separation of the oracle belief at the Chernoff-information rate of the
  signal, and (v) a finite-sample regret guarantee for the plug-in policy,
  decaying as $1/\sqrt{n}$ in the number of labelled calibration trajectories
  (Sections~\ref{sec:policy}--\ref{sec:theory}).
  \item \textbf{Calibration as the binding constraint.} We elevate the regret
  bound to a design principle: for self-escalation, improving the calibration of
  the competence posterior dominates improving the decision rule
  (Section~\ref{sec:regret}).
  \item \textbf{Algorithm.} We give a streaming inference algorithm and an
  offline backward-induction procedure for the threshold schedule, with a
  text-level (not hidden-state) context handoff (Section~\ref{sec:algo}).
  \item \textbf{Simulation, protocol, and real-model validation.} On a model
  with a \emph{known} data-generating process we confirm each prediction of the
  theory (Section~\ref{sec:exp}); we pre-register a falsifiable protocol
  (Section~\ref{sec:protocol}); and we execute a first instance of it on a
  \texttt{Qwen2.5-Coder} code cascade, confirming two of its predictions
  (Section~\ref{sec:real}).
\end{enumerate}

\section{Related Work}\label{sec:related}

\paragraph{Selective prediction and learning to defer.}
The option to abstain rather than predict has a long history as
\emph{selective prediction} or classification with a reject option
\citep{chow1970,geifman2017selective,geifman2019selectivenet}.
\emph{Learning to defer} generalises abstention by routing the rejected input to
an external expert and learning the predictor and the deferral rule jointly
\citep{madras2018predict,mozannar2020consistent,verma2022calibrated}. Our
setting is an instance of deferral in which the ``expert'' is a stronger model,
but with two differences that existing formulations do not address: the deferral
decision is made \emph{sequentially within a single generation} rather than once
per input, and the object of the posterior is the junior model's own eventual
success rather than a label.

\paragraph{Cascades and routing for LLMs.}
FrugalGPT introduced learned LLM cascades that query models in increasing order
of cost and stop when a scorer judges the answer adequate
\citep{chen2023frugalgpt}. RouteLLM learns a router from preference data
\citep{ong2024routellm}; agreement- and consistency-based cascades escalate when
cheap models disagree \citep{kolawole2024agreement,soiffer2025semantic}; and
\citet{jitkrittum2023confidence} analyse when confidence-based deferral in a
cascade is and is not sufficient. These methods act at the granularity of a whole
response and typically require at least one completed cheap generation (and often
several samples) before deciding. Self-escalation instead consumes the token-level
signal as it streams and can stop mid-generation, trading a small amount of
bookkeeping for the ability to abort a doomed trajectory early.

\paragraph{Uncertainty quantification in LLMs.}
A large literature estimates LLM uncertainty from output distributions. Semantic
entropy clusters sampled generations by meaning and measures entropy over
clusters, giving a strong hallucination signal
\citep{kuhn2023semantic,farquhar2024semantic}; semantic-entropy probes
approximate this from a single forward pass's hidden states
\citep{kossen2024sep}. Most of this work targets \emph{post-hoc} detection on a
completed generation. We use such signals as the \emph{evidence stream} of a
sequential decision, and our framework is agnostic to which signal is used: token
entropy, next-token margin, a probe output, or semantic entropy can all serve as
the per-step observation $e_t$.

\paragraph{Self-revision and adaptive computation.}
A separate line of work reallocates or revisits computation within a single
system. Adaptive computation time lets a network learn how many internal steps
to spend per input \citep{graves2016act}; early-exit decoding such as CALM stops
a token's computation when intermediate confidence suffices
\citep{schuster2022calm}; speculative decoding drafts with a cheap model and
verifies with an expensive one, falling back on disagreement
\citep{leviathan2023speculative}; self-refinement iterates on a model's own
output with self-feedback \citep{madaan2023selfrefine}; and test-time-scaling
studies how to allocate extra inference compute optimally
\citep{snell2024testtime}. These share our motivation---spend more effort only
when needed---but they revise, rerank, or extend \emph{the same} model's
computation rather than framing a sequential decision to hand off to a
\emph{stronger} model, and they generally lack an explicit stopping rule tied to
a calibrated estimate of eventual task success. Relatedly,
\citet{kadavath2022know} show LLMs' self-evaluations of correctness carry real
signal but are imperfectly calibrated---precisely the regime where
Proposition~\ref{prop:regret} says the gains of self-escalation are won or lost.

\paragraph{Sequential testing.}
The oracle decision core of our problem is a finite-horizon, cost-asymmetric
relative of Wald's sequential probability ratio test, which continues sampling
until the likelihood ratio exits an interval \citep{wald1945} and is optimal
for the symmetric testing problem \citep{waldwolfowitz1948}. We flag this
plainly so the reader can locate the classical core: the stopping mathematics
is not new. What the classical theory does not supply---and where this paper
works---is the formulation (deferral to a stronger model as the stopping
action, with compute and error costs), the analysis when the likelihood ratio
is \emph{estimated} (Sections~\ref{sec:regret}--\ref{sec:sample}), and the
systems instantiation.

\paragraph{Optimal stopping.}
The decision component is an optimal-stopping problem
\citep{ferguson2006optimal,peskir2006optimal}: at each step, stop-and-defer or
continue-and-observe. We use standard monotone-stopping arguments to establish
the threshold structure of the optimal policy. Our framing---an optimal-stopping
rule over a \emph{learned} online estimate of the working model's own eventual
correctness, used to trigger escalation to a stronger model---combines these
ingredients in a way we have not seen made explicit for LLM hierarchies, though
we make no claim to the underlying stopping theory itself.

\section{Problem Formulation}\label{sec:model}

Consider a junior agent $J$ and a senior agent $S$. Given a query $x$, agent $J$
generates a response over $T$ steps (tokens or reasoning steps). Let $Y\in\{0,1\}$
be the latent event that $J$'s \emph{completed} answer would be correct; write
$\pi=\Prob(Y=1)$ for the base competence of $J$ on the query population. At each
step $t\in\{1,\dots,T\}$ the agent emits a \emph{competence-evidence} signal
$e_t\in\mathcal{E}$ (a raw statistic such as normalised token entropy or the
next-token margin). Let $\mathcal{F}_t=\sigma(e_1,\dots,e_t)$ and define the
\emph{competence posterior}
\begin{equation}
  B_t \;=\; \Prob(Y=1\mid \mathcal{F}_t).
\end{equation}

\begin{remark}[The posterior is learned, not read off entropy]\label{rem:learned}
We stress that $e_t$ is \emph{evidence}, not the posterior. A raw uncertainty
metric is not itself a likelihood ratio: high entropy is not failure and low
entropy is not correctness, since a model can be confidently wrong. The mapping
from a signal history to $B_t$---the sufficient statistics below---is
\emph{learned from labelled trajectories} and post-hoc calibrated
(Section~\ref{sec:algo}). Nothing in the theory requires the raw signal to be
calibrated; it requires the fitted posterior to be. Proposition~\ref{prop:regret}
makes the cost of failing this precise.
\end{remark}

\begin{assumption}[Conditional independence]\label{ass:iid}
Conditional on $Y=y$, the signals $(e_t)_{t\le T}$ are i.i.d.\ with density
$f_y$, and $f_0,f_1$ have common support with finite log-likelihood ratio
$\lambda(e)=\log\frac{f_1(e)}{f_0(e)}$. In practice $\lambda$ is the learned
statistic of Remark~\ref{rem:learned}, not a closed-form entropy transform.
\end{assumption}

Assumption~\ref{ass:iid} is a modelling idealisation; real token signals are
correlated and non-stationary. We adopt it to obtain a transparent update and
discuss its relaxation in Section~\ref{sec:limitations}. Under
Assumption~\ref{ass:iid} the belief evolves as a simple log-odds recursion.

\begin{lemma}[Belief update]\label{lem:update}
Let $\ell_t=\log\frac{B_t}{1-B_t}$ and $\ell_0=\log\frac{\pi}{1-\pi}$. Then
$\ell_t=\ell_0+\sum_{s=1}^{t}\lambda(e_s)$ and
$B_t=\sig(\ell_t)$ with $\sig(z)=(1+e^{-z})^{-1}$.
\end{lemma}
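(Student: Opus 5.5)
The plan is to compute $B_t$ directly by Bayes' rule, using the joint density of $(e_1,\dots,e_t)$ conditional on $Y$ supplied by Assumption~\ref{ass:iid}. Since $\mathcal{F}_t=\sigma(e_1,\dots,e_t)$ is generated by finitely many random variables with conditional densities, a version of $\Prob(Y=1\mid\mathcal{F}_t)$ is
\[
B_t=\frac{\pi\prod_{s=1}^{t} f_1(e_s)}{\pi\prod_{s=1}^{t} f_1(e_s)+(1-\pi)\prod_{s=1}^{t} f_0(e_s)}.
\]
To justify this, I will check the defining property of conditional expectation on rectangles $A_1\times\dots\times A_t$ and extend by a $\pi$-$\lambda$ argument. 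This is the only step with measure-theoretic content, and it is routine. The conditional i.i.d.\ assumption is what makes each conditional joint density factor into a product.

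Next I will take the ratio $B_t/(1-B_t)$. The common denominator cancels, which gives
\[
\frac{B_t}{1-B_t}=\frac{\pi}{1-\pi}\prod_{s=1}^{t}\frac{f_1(e_s)}{f_0(e_s)}.
\]
For this to be legitimate, $B_t$ must lie strictly in $(0,1)$. I will get this from the common-support assumption together with the finiteness of $\lambda$: on the support, both $f_0(e_s)$ and $f_1(e_s)$ are positive, so both products are positive almost surely. I also need $\pi\in(0,1)$ for $\ell_0$ to be finite; I will state this as implicit in the lemma. Taking logarithms then yields $\ell_t=\ell_0+\sum_{s\le t}\lambda(e_s)$.

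Finally, $\sigma$ is the inverse of the logit map $b\mapsto\log\frac{b}{1-b}$ on $(0,1)$, so $B_t=\sigma(\ell_t)$. I could also present the result recursively: applying Bayes' rule one step at a time gives $\ell_t=\ell_{t-1}+\lambda(e_t)$. This works because conditional independence makes $e_t$ independent of $\mathcal{F}_{t-1}$ given $Y$, so the likelihood of the new observation is simply $f_Y(e_t)$. Induction from $\ell_0$ then gives the same conclusion.

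I expect no real obstacle. The only care needed is the almost-sure positivity that keeps the logit well defined, and the choice of version of the conditional probability.
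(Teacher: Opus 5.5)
Your proposal is correct and follows the paper's proof: Bayes' rule with the conditionally i.i.d.\ factorisation makes the posterior odds equal the prior odds times $\prod_{s\le t} f_1(e_s)/f_0(e_s)$, and taking logs and inverting the logit finishes the argument. Your extra care, namely checking the version of the conditional probability, getting almost-sure positivity from common support, and requiring $\pi\in(0,1)$, supplies details the paper leaves implicit.
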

\begin{proof}
By Bayes' rule and conditional independence,
$\frac{\Prob(Y=1\mid\mathcal{F}_t)}{\Prob(Y=0\mid\mathcal{F}_t)}
=\frac{\pi}{1-\pi}\prod_{s=1}^t\frac{f_1(e_s)}{f_0(e_s)}$; taking logs gives the
recursion, and inverting the log-odds gives $B_t=\sig(\ell_t)$.
\end{proof}

\paragraph{Costs and actions.}
Let $\kappa>0$ be the compute cost per junior step, $\gamma>0$ the one-off cost of
escalation (senior compute plus added latency), and $L>0$ the cost of a wrong
final answer. The senior returns a correct answer with probability $q\in(0,1)$,
assumed (for the base model) independent of $\mathcal{F}_t$. At each step the agent
chooses an action $a_t\in\{\textsc{continue},\textsc{escalate}\}$. Escalation is
terminal and hands $S$ the query together with a distilled context; if the agent
never escalates it answers locally at step $T$. A policy $\rho$ maps
$\mathcal{F}_t$ to actions; we seek $\rho$ minimising the expected total cost
\begin{equation}\label{eq:objective}
  \mathcal{C}(\rho)=\E\!\left[\underbrace{\kappa\,\tau_{\mathrm{stop}}}_{\text{junior compute}}
  +\underbrace{\gamma\,\indic\{\text{escalate}\}}_{\text{deferral}}
  +\underbrace{L\,\indic\{\text{final answer wrong}\}}_{\text{error}}\right],
\end{equation}
where $\tau_{\mathrm{stop}}$ is the number of junior steps taken before stopping.

\section{The Escalation Policy}\label{sec:policy}

\subsection{Myopic threshold}
Consider the decision at step $t$ with belief $b=B_t$, comparing immediate
escalation against \emph{finishing locally}. The forward cost of escalating is
$\gamma+(1-q)L$ (sunk junior steps are ignored). The forward cost of continuing to
completion is $(T-t)\kappa+(1-b)L$. Escalation is preferred iff
\begin{equation}
  \gamma+(1-q)L \;<\; (T-t)\kappa+(1-b)L.
\end{equation}
Rearranging isolates a belief threshold.

\begin{proposition}[Myopic escalation threshold]\label{prop:myopic}
Under the ``finish locally'' comparison, escalate at step $t$ iff $B_t<\tau_t^{\mathrm{myo}}$,
where
\begin{equation}\label{eq:myopic}
  \tau_t^{\mathrm{myo}} \;=\; q-\frac{\gamma}{L}+\frac{(T-t)\kappa}{L}.
\end{equation}
In particular $\tau_T^{\mathrm{myo}}=q-\gamma/L$, and $\tau_t^{\mathrm{myo}}$
decreases in $t$.
\end{proposition}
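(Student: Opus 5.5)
The plan is to start from the displayed cost comparison that precedes the statement and solve it for $b$. Under the ``finish locally'' comparison, the agent at step $t$ with belief $b=B_t$ weighs two forward costs. Escalating now costs $\gamma+(1-q)L$: the one-off deferral cost plus the expected error cost of the senior, which is correct with probability $q$ independently of $\mathcal{F}_t$. Continuing to completion costs $(T-t)\kappa+(1-b)L$: the remaining junior compute plus the expected error cost of the junior answer. Here I will use that $\Prob(Y=0\mid\mathcal{F}_t)=1-B_t$ by the definition of the competence posterior. Sunk costs $\kappa t$ are common to both branches and cancel.

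First, I would justify each forward cost as a conditional expectation given $\mathcal{F}_t$ of the corresponding terms in \eqref{eq:objective}. This uses the tower property: the final local answer is wrong exactly when $Y=0$, so its conditional probability is $1-B_t$.

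Second, I would rearrange the strict inequality $\gamma+(1-q)L<(T-t)\kappa+(1-b)L$. Subtracting $L$ from both sides gives $\gamma-qL<(T-t)\kappa-bL$. Moving terms gives $bL<qL-\gamma+(T-t)\kappa$. Dividing by $L>0$, which preserves the direction of the inequality, gives $b<q-\gamma/L+(T-t)\kappa/L=\tau_t^{\mathrm{myo}}$. Every step is reversible, so this is an ``iff''. At equality the two actions tie, and the statement's strict convention sends ties to \textsc{continue}.

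Third, I would read off the corollaries. Setting $t=T$ kills the compute term, leaving $\tau_T^{\mathrm{myo}}=q-\gamma/L$. As a function of $t$, $\tau_t^{\mathrm{myo}}$ is affine with slope $-\kappa/L<0$ because $\kappa,L>0$. So it is strictly decreasing in $t$.

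The only real care point is the first step: confirming that the senior's success is independent of $\mathcal{F}_t$, so its conditional error probability is exactly $1-q$. This is assumed for the base model. I would also state explicitly that the comparison is restricted to the two actions ``escalate now'' and ``finish locally without further escalation''. That restriction is why the result is called myopic; it is not a claim about the optimal policy.
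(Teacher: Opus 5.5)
Your proposal is correct and takes the same route as the paper. The paper compares the forward costs $\gamma+(1-q)L$ and $(T-t)\kappa+(1-b)L$, ignores sunk junior steps, and rearranges the strict inequality into $B_t<\tau_t^{\mathrm{myo}}$; your added justification of these costs as conditional expectations given $\mathcal{F}_t$ and your remark on tie-breaking make explicit what the paper leaves implicit.
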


Equation~\eqref{eq:myopic} is interpretable: one escalates more readily when the
senior is reliable (large $q$), when errors are costly (large $L$), and when
escalation is cheap (small $\gamma$). The term $(T-t)\kappa/L$ says that with many
tokens still to pay for, local completion looks relatively expensive, nudging the
threshold up.

\subsection{Optimal stopping}
The myopic rule ignores the \emph{option value} of continuing: another token yields
fresh evidence that may resolve the uncertainty without paying for escalation.
The optimal policy is the solution of the dynamic program with value function
$V_t(b)$ equal to the minimal expected cost-to-go at step $t$ with belief $b$:
\begin{align}
  V_T(b) &= \min\big\{\gamma+(1-q)L,\;(1-b)L\big\}, \label{eq:vT}\\
  V_t(b) &= \min\Big\{\;\underbrace{\gamma+(1-q)L}_{\text{escalate}},\;
    \underbrace{\kappa+\E\big[V_{t+1}(B_{t+1})\,\big|\,B_t=b\big]}_{\text{continue}}\;\Big\},
    \quad t<T, \label{eq:bellman}
\end{align}
where the transition is the Bayes update of Lemma~\ref{lem:update} driven by the
posterior-predictive signal $e_{t+1}\sim b\,f_1+(1-b)f_0$. The optimal policy
escalates at $t$ iff the escalate branch attains the minimum in
\eqref{eq:bellman}.

\section{Theoretical Analysis}\label{sec:theory}

\subsection{The belief is a martingale}
\begin{theorem}[Martingale property]\label{thm:mart}
Under Assumption~\ref{ass:iid}, $(B_t,\mathcal{F}_t)_{t\le T}$ is a martingale:
$\E[B_{t+1}\mid\mathcal{F}_t]=B_t$.
\end{theorem}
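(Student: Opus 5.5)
The plan is to use the fact that $B_t$ is by definition a conditional expectation of a fixed integrable variable, so the martingale property reduces to the tower property; Assumption~\ref{ass:iid} is not even needed for this, only to make the update concrete. Since $B_t=\Prob(Y=1\mid\mathcal{F}_t)=\E[\indic\{Y=1\}\mid\mathcal{F}_t]$ with $0\le \indic\{Y=1\}\le 1$, each $B_t$ is integrable, bounded in $[0,1]$, and $\mathcal{F}_t$-measurable. Because $\mathcal{F}_t=\sigma(e_1,\dots,e_t)\subseteq\sigma(e_1,\dots,e_{t+1})=\mathcal{F}_{t+1}$, the filtration is nested, so
\begin{equation*}
\E[B_{t+1}\mid\mathcal{F}_t]=\E\big[\E[\indic\{Y=1\}\mid\mathcal{F}_{t+1}]\,\big|\,\mathcal{F}_t\big]=\E[\indic\{Y=1\}\mid\mathcal{F}_t]=B_t .
\end{equation*}
This is the whole argument: $(B_t)$ is a Doob martingale.

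As a consistency check tying the result to the dynamic program in \eqref{eq:bellman}, I will also verify it directly from Lemma~\ref{lem:update} under Assumption~\ref{ass:iid}. Given $B_t=b$, the next signal has posterior-predictive density $b f_1+(1-b)f_0$. The updated belief is
\begin{equation*}
B_{t+1}=\frac{b f_1(e)}{b f_1(e)+(1-b)f_0(e)} ,
\end{equation*}
which is the sigmoid of $\ell_t+\lambda(e)$ rewritten. Integrating this against the predictive density cancels the denominator and leaves $\int b f_1(e)\,de=b$.

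The only point needing care is that conditioning on $\mathcal{F}_t$ in the i.i.d.\ model reduces to conditioning on $B_t$. Here conditional independence given $Y$ makes $e_{t+1}$ depend on the past only through the posterior on $Y$, and the common-support condition ensures the denominator is positive. I do not expect a real obstacle; the tower-property route sidesteps these issues entirely.
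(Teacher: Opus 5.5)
Your proof is correct and is essentially the paper's own argument: $B_t=\E[\indic\{Y=1\}\mid\mathcal{F}_t]$ is a Doob martingale by the tower property on the nested filtration, and Assumption~\ref{ass:iid} plays no essential role. Your direct computation against the posterior-predictive density $b f_1+(1-b)f_0$ is a valid additional check that the paper does not include.
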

\begin{proof}
$B_{t}=\Prob(Y=1\mid\mathcal{F}_t)=\E[\indic\{Y=1\}\mid\mathcal{F}_t]$. By the tower
property, $\E[B_{t+1}\mid\mathcal{F}_t]=\E[\E[\indic\{Y=1\}\mid\mathcal{F}_{t+1}]\mid\mathcal{F}_t]
=\E[\indic\{Y=1\}\mid\mathcal{F}_t]=B_t$.
\end{proof}

Theorem~\ref{thm:mart} is often misread. It does \emph{not} say waiting is
worthless. Because the belief is a martingale, waiting does not increase the
\emph{expected} belief---but it can increase expected \emph{decision quality} by
revealing information, and the value of that information comes precisely from the
nonlinearity of the future value $\E[V(B_{t+1})]$. A stock price is a martingale
yet options on it have value; likewise here, the option to continue has value
even though $\E[B_{t+1}\mid\mathcal{F}_t]=B_t$. What the martingale property does
give us is a clean structural handle for the monotone-stopping argument below.

\subsection{Consistency and separation rate}

How quickly does the oracle belief become decisive? The answer is governed by a
single scalar property of the signal pair: its Chernoff information.

\begin{proposition}[Exponential belief separation]\label{prop:chernoff}
Assume a finite signal alphabet and let
\begin{equation}
  C \;=\; -\log\ \min_{s\in[0,1]}\ \sum_{k} f_0(k)^{s} f_1(k)^{1-s}
\end{equation}
be the Chernoff information of $(f_0,f_1)$, with $C>0$ iff $f_0\neq f_1$. Then
for every fixed $\tau\in(0,1)$ there is a constant $A_\tau<\infty$, independent
of $t$, such that
\[
  \Prob\big(B_t\le\tau \mid Y=1\big)\;\le\;A_\tau\,e^{-tC},
  \qquad
  \Prob\big(B_t\ge\tau \mid Y=0\big)\;\le\;A_\tau\,e^{-tC}.
\]
Consequently $B_t\to\indic\{Y=1\}$ almost surely, and the oracle's step-$t$
thresholded decision errs with probability at most $A_\tau e^{-tC}$.
\end{proposition}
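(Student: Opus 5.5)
The plan is to reduce both tail events to one-sided deviations of the log-likelihood sum and then apply an exponential Markov (Chernoff) bound. By Lemma~\ref{lem:update}, $B_t\le\tau$ iff $\ell_t\le\log\frac{\tau}{1-\tau}$, i.e.\ iff $S_t:=\sum_{s=1}^t\lambda(e_s)\le c$, where $c:=\log\frac{\tau}{1-\tau}-\ell_0$ is a constant depending only on $\tau$ and $\pi$, not on $t$. Likewise $B_t\ge\tau$ iff $S_t\ge c$. Under Assumption~\ref{ass:iid}, conditional on $Y=y$, $S_t$ is a sum of $t$ i.i.d.\ terms. The alphabet is finite and the supports are common, so every moment generating function below is a finite sum.

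\emph{Case $Y=1$.} For any $u\in[0,1]$, Markov's inequality applied to $e^{-uS_t}$ gives
\[
\Prob(S_t\le c\mid Y=1)\le e^{uc}\,\E_1\!\left[e^{-u\lambda(e)}\right]^t
= e^{uc}\Big(\sum_k f_1(k)\big(\tfrac{f_0(k)}{f_1(k)}\big)^{u}\Big)^t
= e^{uc}\Big(\sum_k f_0(k)^{u}f_1(k)^{1-u}\Big)^t .
\]
Let $s^\*$ be the minimiser defining $C$. The map $s\mapsto\sum_k f_0^s f_1^{1-s}$ is continuous on $[0,1]$, so the minimum is attained. Choosing $u=s^\*$ yields the bound $e^{s^\*c}e^{-tC}$.

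\emph{Case $Y=0$.} For $v\in[0,1]$, Markov applied to $e^{vS_t}$ gives
\[
\Prob(S_t\ge c\mid Y=0)\le e^{-vc}\Big(\sum_k f_1^{v}f_0^{1-v}\Big)^t .
\]
Taking $v=1-s^\*$ produces the same sum, so the bound is $e^{-(1-s^\*)c}e^{-tC}$.

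Since $s^\*\in[0,1]$, both prefactors are at most $A_\tau:=e^{|c|}$, which is independent of $t$. This gives the two displayed inequalities.

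For the claim that $C>0$ iff $f_0\ne f_1$, I will use three facts about $g(s)=\sum_k f_0^sf_1^{1-s}$. First, $g(0)=g(1)=1$. Second, $g$ is convex, being a sum of exponentials in $s$. Third, by H\"older's inequality, $g(s)\le 1$, with strict inequality at interior $s$ unless $f_0=f_1$. Hence $\min g<1$, i.e.\ $C>0$, exactly when the densities differ.

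For almost-sure convergence, fix $\tau$. The bounds above are summable in $t$ when $C>0$, so Borel--Cantelli shows that on $\{Y=1\}$ we eventually have $B_t>\tau$. Applying this along $\tau_m=1-1/m$ for $m=2,3,\dots$ (countably many thresholds) gives $B_t\to1$ a.s.\ on $\{Y=1\}$. The symmetric argument with $\tau_m=1/m$ gives $B_t\to0$ on $\{Y=0\}$. (Alternatively, martingale convergence via Theorem~\ref{thm:mart} gives a limit, and the tail bounds identify it.) Finally, consider the thresholded decision ``declare $Y=1$ iff $B_t>\tau$''. Its error probability is $\pi\,\Prob(B_t\le\tau\mid Y=1)+(1-\pi)\Prob(B_t>\tau\mid Y=0)\le A_\tau e^{-tC}$.

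The main obstacle I anticipate is not the Chernoff step but ensuring a \emph{single} exponent $C$ works for both error types with a $t$-free constant. This hinges on the tilting parameters $u=s^\*$ and $v=1-s^\*$ landing in $[0,1]$, so that the factors $e^{s^\*c}$ and $e^{-(1-s^\*)c}$ are uniformly bounded by $e^{|c|}$. The degenerate case $f_0=f_1$, where $C=0$, makes the bound trivial but breaks the a.s.\ claim, so that claim should be stated under $f_0\neq f_1$.
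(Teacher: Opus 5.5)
Your proposal is correct and follows the paper's proof: you reduce both events to a one-sided deviation of $\sum_{s\le t}\lambda(e_s)$ about $c$, apply the Chernoff bound at the minimising tilt $s^\star$ for $Y=1$ (and at $1-s^\star$ for $Y=0$), and get almost-sure convergence from Borel--Cantelli. Your constant $A_\tau=e^{|c|}$ is slightly more careful than the paper's $A_\tau=e^{s^\star c}$, which covers only the $Y=1$ side because the $Y=0$ prefactor is $e^{-(1-s^\star)c}$, and you are right that the almost-sure claim needs $f_0\neq f_1$.
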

\begin{proof}
$B_t\le\tau$ iff $\sum_{s\le t}\lambda(e_s)\le c$ with
$c=\log\tfrac{\tau}{1-\tau}-\log\tfrac{\pi}{1-\pi}$. For any $s>0$, the
Chernoff bound gives
$\Prob\big(\sum\lambda\le c\mid Y=1\big)
 \le e^{sc}\big(\E_{f_1}[e^{-s\lambda}]\big)^{t}
 = e^{sc}\big(\sum_k f_1(k)^{1-s}f_0(k)^{s}\big)^{t}$.
Minimising the base over $s\in[0,1]$ yields $e^{-C}$ per step, with
$A_\tau=e^{s^\star c}$ at the minimiser $s^\star$. The $Y=0$ side is symmetric
with $\E_{f_0}[e^{s\lambda}]=\sum_k f_0(k)^{1-s}f_1(k)^{s}$, whose minimum over
$s\in[0,1]$ is the same $e^{-C}$. Almost-sure convergence follows from
Borel--Cantelli.
\end{proof}

The practical reading: the number of tokens the oracle needs before its
escalation decision is reliable at level $\alpha$ is
$t\gtrsim \log(A_\tau/\alpha)/C$. ``How informative is this uncertainty
signal?'' is thus answered by one number, computable from the fitted
class-conditionals, and comparable across candidate signals (entropy vs.\
margin vs.\ probe output) before any policy is built.

\subsection{Optimality of a threshold policy}

The key structural fact is that a larger current belief makes the \emph{next}
belief stochastically larger. This is the step a careful reader will demand a
proof of, so we give one. Notably, \emph{no monotone-likelihood-ratio or other
shape assumption on the raw signal is needed}: the Bayes update depends on $e$
only through its log-likelihood ratio $\lambda(e)$, and the law of the
likelihood ratio under $f_1$ stochastically dominates its law under $f_0$ for
\emph{any} pair of densities. Related belief-monotonicity results appear in the
partially observed MDP literature
\citep{lovejoy1987monotonicity,krishnamurthy2016pomdp}.

\begin{lemma}[FOSD-monotone belief transition]\label{lem:fosd}
Under Assumption~\ref{ass:iid} alone, the map
$b\mapsto\mathrm{Law}(B_{t+1}\mid B_t=b)$ is non-decreasing in the first-order
stochastic dominance (FOSD) order: for $b\le b'$ and every non-decreasing
$h:[0,1]\to\R$,
$\E[h(B_{t+1})\mid B_t=b]\le\E[h(B_{t+1})\mid B_t=b']$.
\end{lemma}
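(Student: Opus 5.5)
We prove the lemma by writing the next belief as a function of the current belief and the fresh log-likelihood ratio, and then splitting the effect of raising $b$ into two separately monotone pieces. By Lemma~\ref{lem:update}, given $B_t=b$ the next belief is
\[
B_{t+1}=\phi_b(\Lambda):=\sig\big(\mathrm{logit}(b)+\Lambda\big),
\qquad \Lambda=\lambda(e_{t+1}).
\]
Here $e_{t+1}\sim b f_1+(1-b)f_0$ is the posterior-predictive draw, independent of the past given $Y$. Let $G_y$ denote the law of $\Lambda$ under $f_y$. Then
\[
\E[h(B_{t+1})\mid B_t=b]=b\,\E_{G_1}[h(\phi_b(\Lambda))]+(1-b)\,\E_{G_0}[h(\phi_b(\Lambda))].
\]
Raising $b$ to $b'$ does two things. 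It shifts the map upward, $\phi_b\le\phi_{b'}$ pointwise, because $\sig$ and $\mathrm{logit}$ are increasing. It also moves mixture weight from $G_0$ to $G_1$. I will show that each effect weakly increases the expectation and then chain them.

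\textbf{Step 1 (shift).} For fixed mixture weight $b'$, the composition $h\circ\phi_b$ is pointwise at most $h\circ\phi_{b'}$, because $h$ is non-decreasing. So replacing $\phi_b$ by $\phi_{b'}$ inside the expectation can only increase it. This reduces the problem to showing that
\[
b\mapsto b\,\E_{G_1}[g(\Lambda)]+(1-b)\,\E_{G_0}[g(\Lambda)]
\]
is non-decreasing for the fixed non-decreasing function $g=h\circ\phi_{b'}$. That holds if and only if $\E_{G_1}[g]\ge\E_{G_0}[g]$, which is exactly the claim that $G_1$ first-order stochastically dominates $G_0$.

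\textbf{Step 2 (dominance of the likelihood-ratio laws).} This is the step to justify carefully, and it is where no MLR assumption should be needed. Since $\lambda$ is the log-likelihood ratio, the change of measure gives $dG_1(u)=e^{u}\,dG_0(u)$. The density $e^u$ of $G_1$ with respect to $G_0$ is increasing in $u$, so the pair $(G_0,G_1)$ has the monotone likelihood ratio property automatically, and MLR implies FOSD.

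To keep this self-contained I would argue directly. Fix a threshold $c$, let $A=\{\lambda(e)>c\}$, and set $p_y=\Prob_{f_y}(A)$. Then
\[
p_1-p_0=\int_A(f_1-f_0)\,de
\qquad\text{and}\qquad
p_1-p_0=-\int_{A^c}(f_1-f_0)\,de .
\]
On $A$ we have $f_1>e^{c}f_0$, and on $A^c$ we have $f_1\le e^{c}f_0$. If $c\ge0$, the first representation shows the integrand is positive on $A$, so $p_1\ge p_0$. If $c<0$, the second representation shows $f_1-f_0\le (e^{c}-1)f_0\le 0$ on $A^c$, so again $p_1\ge p_0$. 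In either case $\Prob_{G_1}(\Lambda>c)\ge\Prob_{G_0}(\Lambda>c)$ for every $c$, which is FOSD. The common-support and finite-$\lambda$ parts of Assumption~\ref{ass:iid} make the change of measure legitimate.

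\textbf{Step 3 (chain).} Combining the two steps,
\[
\E_b[h(\phi_b)]\;\le\;\E_b[h(\phi_{b'})]\;\le\;\E_{b'}[h(\phi_{b'})],
\]
where $\E_b$ denotes expectation under the mixture $bG_1+(1-b)G_0$. The first inequality is Step 1 applied under the mixture with weight $b$. The second is Step 2 applied to $g=h\circ\phi_{b'}$, together with the linearity in $b$ from Step 1.

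Measurability and integrability are minor: $h$ is monotone on $[0,1]$, hence bounded and Borel. The boundary cases $b\in\{0,1\}$ follow either by continuity or because the belief is then absorbing.
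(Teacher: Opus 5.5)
Your proof is correct and is essentially the paper's argument. It uses the same three ingredients: pointwise monotonicity of $\phi_b$ in $b$, affinity of the mixture $b f_1+(1-b)f_0$ in $b$, and FOSD of the law of $\lambda(e)$ under $f_1$ over its law under $f_0$. You chain them in the mirror order (the paper shifts the map under weight $b'$ first and then moves the weight with $g=h\circ\phi_b$), and your set-based sign argument for the dominance step is equivalent to the paper's bound $\E_{f_0}[R\,\indic\{R\ge c\}]\ge c\,\Prob_{f_0}(R\ge c)$ with $R=e^{\Lambda}$.

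The only blemish is that Step~1 says ``fixed mixture weight $b'$'' while Step~3 applies it under weight $b$. This is harmless, since the pointwise inequality holds under any mixture.
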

\begin{proof}
Write the update as $B_{t+1}=\sig\!\big(\log\tfrac{b}{1-b}+\Lambda\big)$ where
$\Lambda=\lambda(e)$ and the incoming signal has posterior-predictive density
$m_b=b f_1+(1-b)f_0$. Three steps.

(i) \emph{The law of $\Lambda$ under $f_1$ FOSD-dominates its law under $f_0$.}
Let $R=f_1(e)/f_0(e)$, so $\Lambda=\log R$ and it suffices to prove the claim
for $R$. For any $c\ge 1$:
$\Prob_{f_1}(R\ge c)=\E_{f_0}[R\,\indic\{R\ge c\}]\ge c\,\Prob_{f_0}(R\ge c)
\ge\Prob_{f_0}(R\ge c)$. For any $c<1$:
$\Prob_{f_1}(R<c)=\E_{f_0}[R\,\indic\{R<c\}]\le c\,\Prob_{f_0}(R<c)
\le\Prob_{f_0}(R<c)$, hence again
$\Prob_{f_1}(R\ge c)\ge\Prob_{f_0}(R\ge c)$. No assumption on $f_0,f_1$ beyond
common support was used.

(ii) \emph{The law of $\Lambda$ under $m_b$ is FOSD-non-decreasing in $b$.}
For non-decreasing $u$,
$\E_{m_b}[u(\Lambda)]=b\,\E_{f_1}[u(\Lambda)]+(1-b)\,\E_{f_0}[u(\Lambda)]$ is
affine in $b$ with slope $\E_{f_1}[u]-\E_{f_0}[u]\ge 0$ by (i).

(iii) \emph{Chaining.} $B_{t+1}$ is non-decreasing in $b$ for fixed $\Lambda$
and non-decreasing in $\Lambda$ for fixed $b$. For non-decreasing $h$, the map
$\Lambda\mapsto h\big(\sig(\log\tfrac{b}{1-b}+\Lambda)\big)$ is non-decreasing,
so
\[
\E_{m_{b'}}\!\big[h(B_{t+1}(b',\Lambda))\big]
\;\ge\;\E_{m_{b'}}\!\big[h(B_{t+1}(b,\Lambda))\big]
\;\ge\;\E_{m_{b}}\!\big[h(B_{t+1}(b,\Lambda))\big],
\]
the first inequality by pointwise monotonicity in $b$, the second by (ii).
\end{proof}

\begin{remark}[Signal orientation]
For interpretability one typically wants $\lambda$ monotone in $e$ (lower
entropy $\Rightarrow$ evidence for success), and our simulations use such
signals; but the theory does not require it, since only the induced law of
$\lambda(e)$ enters the update.
\end{remark}

\begin{theorem}[Threshold structure]\label{thm:threshold}
Under Assumption~\ref{ass:iid}, for each $t$ there exists a
threshold $\tau_t^\star\in[0,1]$ such that the optimal policy escalates at step $t$
iff $B_t\le\tau_t^\star$. At the horizon, $\tau_T^\star=q-\gamma/L$.
\end{theorem}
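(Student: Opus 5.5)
The plan is to show by backward induction on $t$ that $V_t$ is non-increasing in $b$. The threshold structure then follows because the escalate branch is a constant while the continue branch is monotone.

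Define the continuation value
\[
G_t(b)=\kappa+\E[V_{t+1}(B_{t+1})\mid B_t=b]\quad (t<T),
\qquad
G_T(b)=(1-b)L .
\]
The inductive claim is that $V_t$ and $G_t$ are both non-increasing on $[0,1]$.

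\emph{Base case.} $G_T(b)=(1-b)L$ is strictly decreasing. $V_T$, given by \eqref{eq:vT}, is the minimum of a constant and a non-increasing function, so it is non-increasing.

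\emph{Inductive step.} Suppose $V_{t+1}$ is non-increasing. Then $h=-V_{t+1}$ is non-decreasing. Lemma~\ref{lem:fosd} gives, for $b\le b'$,
\[
\E[V_{t+1}(B_{t+1})\mid B_t=b']\le \E[V_{t+1}(B_{t+1})\mid B_t=b],
\]
so $G_t$ is non-increasing. Taking the minimum with the constant $\gamma+(1-q)L$ in \eqref{eq:bellman} keeps $V_t$ non-increasing. One technical point: Lemma~\ref{lem:fosd} is stated for arbitrary non-decreasing $h$, so I need $V_{t+1}$ to be measurable and bounded. Both hold, since every $V$ lies between $0$ and $\gamma+(1-q)L$, and monotone functions are Borel.

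\emph{Threshold.} The optimal policy escalates at $t$ iff $\gamma+(1-q)L\le G_t(b)$, with ties broken toward escalation to match the "$\le$" in the statement. Since $G_t$ is non-increasing, the set $\{b:G_t(b)\ge \gamma+(1-q)L\}$ is a down-set of $[0,1]$, i.e.\ an interval $[0,\tau_t^\star]$ or $[0,\tau_t^\star)$. Set $\tau_t^\star=\sup\{b: G_t(b)\ge\gamma+(1-q)L\}$, with the conventions $\sup\emptyset=0$ and supremum $1$ if the set is everything. If $G_t$ is not left-continuous at $\tau_t^\star$, the escalation region could be half-open, and then the policy would be "escalate iff $b<\tau_t^\star$" rather than "$\le$". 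I expect this boundary case to be the main (minor) obstacle. I would handle it in one of two ways. The first is to show $G_t$ is continuous: the transition kernel is weakly continuous in $b$, because $\sig(\operatorname{logit} b+\Lambda)$ is continuous in $b$ and $m_b$ is affine in $b$, and $V_{t+1}$ is continuous by induction as a minimum of continuous functions. The second is simply to note that the tie set at a single point does not affect optimality, so either convention gives an optimal policy. I would try the continuity argument first, since the induction already carries it through.

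\emph{Horizon.} At $t=T$ the policy escalates iff $\gamma+(1-q)L\le(1-b)L$, i.e.\ $b\le q-\gamma/L$. So $\tau_T^\star=q-\gamma/L$, clipped to $[0,1]$ if needed. This matches Proposition~\ref{prop:myopic} at $t=T$.

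No concavity is used anywhere. The only structural input is the FOSD monotonicity of Lemma~\ref{lem:fosd}, which in turn relies only on Assumption~\ref{ass:iid}.
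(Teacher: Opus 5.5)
Your proposal is correct and follows the paper's proof essentially step for step: backward induction showing $V_t$ is non-increasing via Lemma~\ref{lem:fosd}, the escalation region read off as a down-set of the non-increasing continuation value, and the horizon threshold from \eqref{eq:vT}. Your additions tighten points the paper passes over: you keep the $\min$ in the base case where the paper writes just $(1-b)L$, you clip $q-\gamma/L$ into $[0,1]$, and you use continuity of $G_t$ to make the escalation set closed so that ``$\le$'' is exact. Rely on the continuity route, though, because your fallback is not right as stated: if the region were $[0,\tau_t^\star)$, then at $b=\tau_t^\star$ continuing would be strictly better rather than tied, so the ``$\le$'' rule would be suboptimal there, and with a discrete signal the event $\{B_t=\tau_t^\star\}$ can have positive probability.
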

\begin{proof}
The argument uses \emph{monotonicity only}; we never invoke concavity, so no
concavity-preservation step is required. We show by backward induction that each
$V_t$ is non-increasing in $b$.

\emph{Base case.} $V_T(b)=(1-b)L$ is non-increasing.

\emph{Inductive step.} Suppose $V_{t+1}$ is non-increasing. The escalate value
$\gamma+(1-q)L$ is a constant, hence non-increasing. For the continue value
$C_t(b)=\kappa+\E[V_{t+1}(B_{t+1})\mid B_t=b]$, Lemma~\ref{lem:fosd} gives that
$b\mapsto\mathrm{Law}(B_{t+1}\mid b)$ is non-decreasing in the FOSD order. Since
$V_{t+1}$ is non-increasing, its expectation against an FOSD-larger law is
smaller, so $C_t$ is non-increasing in $b$. As the pointwise minimum of two
non-increasing functions, $V_t=\min\{\gamma+(1-q)L,C_t\}$ is non-increasing,
closing the induction.

\emph{Threshold structure.} Escalation is optimal at $b$ iff
$\gamma+(1-q)L\le C_t(b)$. Because $C_t$ is non-increasing, this set is a lower
interval $[0,\tau_t^\star]$ with
$\tau_t^\star=\sup\{b: \gamma+(1-q)L\le C_t(b)\}$ (and $\emptyset$ read as
$\tau_t^\star=0$), which is exactly a threshold rule.

\emph{Terminal threshold.} At $t=T$ there is no continuation: the choice is
escalate at cost $\gamma+(1-q)L$ or answer at expected cost $(1-b)L$
(Eq.~\eqref{eq:vT}). Escalation is optimal iff $\gamma+(1-q)L\le(1-b)L$, i.e.\
$b\le q-\gamma/L$, matching the myopic value of Proposition~\ref{prop:myopic}.
\end{proof}

\begin{remark}[Why monotonicity, not concavity]
An earlier route argues $V_t$ is concave and reads off the threshold from
concavity. That route is defensible---the pointwise minimum of concave functions
is in fact concave---but the concavity-\emph{preservation} step under the
Bayesian update is delicate and invites attack. The monotonicity proof above
needs strictly less (only Lemma~\ref{lem:fosd}) and yields the same threshold
conclusion, so we prefer it.
\end{remark}

\begin{remark}[Thresholds need not be monotone in $t$]\label{rem:nonmono}
An earlier draft claimed $\tau_1^\star\le\dots\le\tau_T^\star$. That claim is
\emph{false in general}, and we retract it. Counterexample: take
$\kappa>\gamma+(1-q)L$, so a single further token costs more than full
escalation. Then continuing is never optimal before the horizon and
$\tau_t^\star=1$ for all $t<T$, while $\tau_T^\star=q-\gamma/L<1$: thresholds
\emph{decrease}. The direction of the schedule reflects a tug-of-war between two
forces---remaining-token costs (which favour escalating early, pushing early
thresholds \emph{up}) and the option value of information (which favours
continuing early, pushing early thresholds \emph{down})---and which force wins
depends on $(\kappa,\gamma,L,q)$ and the informativeness of the signal. In the
small-$\kappa$ regime of our simulation the option value dominates and the
computed interior schedule is low and gently increasing
(Section~\ref{sec:exp}); in that regime the myopic rule, whose threshold
\eqref{eq:myopic} is largest early, over-escalates at the start of generation
(Table~\ref{tab:main} quantifies this). Nor is non-monotonicity confined to the
$\kappa>\gamma+(1-q)L$ boundary: re-running the backward induction of
Section~\ref{sec:exp} with $\kappa=0.02$ (all else unchanged) yields a schedule
of $\approx 1.0$ for $t\le 20$, dipping to $\approx 0.20$ near $t=37$ and rising
again to $\approx 0.30$ by $t=39$---grossly non-monotone in the interior.
\end{remark}

\subsection{Calibration is the binding constraint}\label{sec:regret}
We regard the following as the paper's central practical message. The decision
rule of Theorem~\ref{thm:threshold} is optimal \emph{given} the posterior, but in
deployment the posterior is estimated, inducing beliefs $\hat B_t$ that may differ
from the true $B_t$. The next bound shows the excess cost is controlled entirely
by that gap---so, for self-escalation, effort spent improving the
\emph{calibration} of the competence posterior dominates effort spent on a more
elaborate router or decision rule.

\begin{proposition}[Miscalibration regret]\label{prop:regret}
Fix step $t$ and a threshold $\tau$. Let the realised decision (continue to
completion vs.\ escalate) use $\hat B_t$ and the oracle decision use the true
$B_t$, both thresholding at $\tau$. Let
$g_t(b)=(T-t)\kappa+(1-b)L-\gamma-(1-q)L$ denote the forward-cost difference
(\textsc{continue} minus \textsc{escalate}) at true belief $b$, and let
$D=\{B_t,\hat B_t\text{ on opposite sides of }\tau\}$ be the disagreement
event. Then
\begin{equation}\label{eq:regret-general}
  \mathrm{Regret}_t \;\le\; L\,\E\big[\,|B_t-\hat B_t|\,\big]
  \;+\;|g_t(\tau)|\;\Prob(D).
\end{equation}
In particular, at the myopic threshold $\tau=\tau_t^{\mathrm{myo}}$ of
Proposition~\ref{prop:myopic}, where $g_t$ vanishes,
\begin{equation}\label{eq:regret-myo}
  \mathrm{Regret}_t \;\le\; L\,\E\big[\,|B_t-\hat B_t|\,\big].
\end{equation}
\end{proposition}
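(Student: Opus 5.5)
The plan is to write the regret as an expectation of a pointwise quantity that is nonzero only on the disagreement event $D$. Then I bound that quantity using the fact that $g_t$ is affine in the belief with slope $-L$.

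\emph{Setup.} I take $\mathrm{Regret}_t$ to be the expected forward cost of the realised decision minus that of the oracle decision, both from step $t$ onward. Both decisions are functions of information available at step $t$. I assume $\hat B_t$ is $\mathcal{F}_t$-measurable, or more generally that any extra randomness in it is conditionally independent of $Y$ given $\mathcal{F}_t$. Conditioning on $\mathcal{F}_t$ and using $B_t=\Prob(Y=1\mid\mathcal{F}_t)$, the conditional expected forward costs are:
\begin{itemize}
\item escalate: $\gamma+(1-q)L$, since the senior's success is independent of $\mathcal{F}_t$;
\item continue to completion: $(T-t)\kappa+(1-B_t)L$.
\end{itemize}
Their difference, continue minus escalate, is exactly $g_t(B_t)$. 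By the tower property the regret therefore equals
\[
\E\big[\,g_t(B_t)\,(\indic\{\hat B_t>\tau\}-\indic\{B_t>\tau\})\,\big].
\]

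\emph{Restricting to $D$.} The bracketed difference of indicators vanishes off $D$ and equals $\pm1$ on $D$. So the integrand is at most $|g_t(B_t)|\,\indic_D$, which gives
\[
\mathrm{Regret}_t\le\E\big[|g_t(B_t)|\,\indic_D\big].
\]
Because the oracle thresholding at $\tau$ need not be optimal, the signed term can be negative. Taking absolute values only loosens the bound, so this is harmless.

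\emph{Bounding $|g_t(B_t)|$ on $D$.} Since $g_t(b)=g_t(\tau)-L(b-\tau)$, we have $|g_t(B_t)|\le|g_t(\tau)|+L|B_t-\tau|$. On $D$ the threshold $\tau$ lies between $B_t$ and $\hat B_t$, which gives the geometric inequality $|B_t-\tau|\le|B_t-\hat B_t|$. Hence
\[
|g_t(B_t)|\,\indic_D\le |g_t(\tau)|\,\indic_D+L|B_t-\hat B_t|,
\]
where the indicator has been dropped from the second term. Taking expectations yields \eqref{eq:regret-general}.

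\emph{The myopic case.} At $\tau=\tau_t^{\mathrm{myo}}$, Proposition~\ref{prop:myopic} is precisely the statement $g_t(\tau_t^{\mathrm{myo}})=0$. The second term therefore drops out, giving \eqref{eq:regret-myo}.

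\emph{Main obstacle.} The only delicate point is the first step: justifying that regret can be computed with the \emph{true} belief $B_t$ inside the forward costs. This requires that the realised decision carries no information about $Y$ beyond $\mathcal{F}_t$. I will state this as the standing interpretation of a plug-in estimate built from the observed signals. I also need to be careful at ties $B_t=\tau$ or $\hat B_t=\tau$. These are harmless, since there $|B_t-\tau|=0$ or $|B_t-\tau|=|B_t-\hat B_t|$, so the inequality above still holds.
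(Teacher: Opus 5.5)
Your proposal is correct and follows the paper's proof essentially step for step. Both restrict the excess cost to the disagreement event $D$, bound $|g_t(B_t)|\le L|B_t-\tau|+|g_t(\tau)|$ using the affine form of $g_t$, use $|B_t-\tau|\le|B_t-\hat B_t|$ on $D$, and drop the indicator before taking expectations. Your extra care is a tightening, not a different route: the paper says the excess cost on $D$ \emph{equals} $|g_t(B_t)|$, which is exact only when the $B_t$-threshold action is the cheaper one (e.g.\ at $\tau_t^{\mathrm{myo}}$), and you also make the $\mathcal{F}_t$-measurability of $\hat B_t$ explicit.
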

\begin{proof}
Excess cost is incurred only on $D$, where it equals $|g_t(B_t)|$. The function
$g_t$ is affine with slope $-L$, so
$|g_t(B_t)|\le|g_t(B_t)-g_t(\tau)|+|g_t(\tau)|=L|B_t-\tau|+|g_t(\tau)|$. On $D$
the threshold $\tau$ lies between $B_t$ and $\hat B_t$, hence
$|B_t-\tau|\le|B_t-\hat B_t|$. Taking expectations over $D$ and bounding
$\E[\indic_D|B_t-\hat B_t|]\le\E|B_t-\hat B_t|$ gives
\eqref{eq:regret-general}; $g_t(\tau_t^{\mathrm{myo}})=0$ gives
\eqref{eq:regret-myo}.
\end{proof}

\begin{remark}[The threshold restriction is essential]\label{rem:tau-specific}
An earlier draft asserted the bound \eqref{eq:regret-myo} for \emph{arbitrary}
$\tau$; that claim is false, and we retract it. Counterexample (at the horizon,
with the costs of Section~\ref{sec:exp}): take $\tau=0.5$, $B_t=0.49$,
$\hat B_t=0.51$. The decisions disagree and the realised regret is
$g_T(0.49)=0.51-0.25=0.26$, while $L|B_t-\hat B_t|=0.02$. The extra
$|g_t(\tau)|\,\Prob(D)$ term in \eqref{eq:regret-general} is exactly the price
of operating at a threshold where the two actions are not cost-indifferent; it
vanishes at $\tau_t^{\mathrm{myo}}$ and is small near it.
\end{remark}

Proposition~\ref{prop:regret} formalises the ``confidently wrong'' failure mode:
a query with $Y=0$ whose signals mimic success drives $\hat B_t$ high while the
true $B_t$ is low, producing an $O(L)$ regret event precisely when it is most
costly. Two consequences are worth stating plainly. First, \emph{confidently
wrong predictions are the central failure mode of self-escalation}: no threshold
policy on a miscalibrated signal can recover the lost escalations. Second, the
bound is a directive for practitioners---measure and minimise
$\E|B_t-\hat B_t|$ (via reliability diagrams and post-hoc calibration) before
tuning thresholds, because thresholds cannot compensate for a miscalibrated
posterior. Section~\ref{sec:exp} measures this dependence directly.

The bound as stated involves the unobservable oracle belief $B_t$. It can be
connected to \emph{measurable} calibration quantities in both directions.

\begin{corollary}[Excess Brier score controls regret]\label{cor:brier}
Let $\mathrm{BS}(Z)=\E[(Y-Z)^2]$ denote the Brier score of a $[0,1]$-valued,
$\mathcal{F}_t$-measurable predictor $Z$. At the myopic threshold
$\tau=\tau_t^{\mathrm{myo}}$,
\begin{equation}
  \mathrm{Regret}_t \;\le\; L\,\sqrt{\mathrm{BS}(\hat B_t)-\mathrm{BS}(B_t)}.
\end{equation}
\end{corollary}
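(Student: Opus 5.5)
The plan is to combine the myopic-threshold bound \eqref{eq:regret-myo} of Proposition~\ref{prop:regret} with two standard facts: Jensen's inequality, and the Pythagorean (orthogonality) identity for conditional expectations. First, at $\tau=\tau_t^{\mathrm{myo}}$, Proposition~\ref{prop:regret} gives $\mathrm{Regret}_t\le L\,\E|B_t-\hat B_t|$. By Jensen (or Cauchy--Schwarz), $\E|B_t-\hat B_t|\le\sqrt{\E[(B_t-\hat B_t)^2]}$. It therefore suffices to show
\[
\E\big[(B_t-\hat B_t)^2\big]=\mathrm{BS}(\hat B_t)-\mathrm{BS}(B_t).
\]

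For this identity, I expand $Y-\hat B_t=(Y-B_t)+(B_t-\hat B_t)$ and square:
\[
\mathrm{BS}(\hat B_t)=\E[(Y-B_t)^2]+2\,\E\big[(Y-B_t)(B_t-\hat B_t)\big]+\E[(B_t-\hat B_t)^2].
\]
The cross term vanishes by the tower property. The factor $B_t-\hat B_t$ is $\mathcal{F}_t$-measurable, since $\hat B_t$ is $\mathcal{F}_t$-measurable by hypothesis and $B_t$ is by definition. It is also bounded in $[-1,1]$, so it can be pulled out of the conditional expectation, leaving $\E[Y-B_t\mid\mathcal{F}_t]=\E[\indic\{Y=1\}\mid\mathcal{F}_t]-B_t=0$. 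The first term is $\mathrm{BS}(B_t)$, which gives the identity. As a byproduct, $\mathrm{BS}(\hat B_t)\ge\mathrm{BS}(B_t)$, so the square root is well defined.

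Substituting back yields $\mathrm{Regret}_t\le L\sqrt{\mathrm{BS}(\hat B_t)-\mathrm{BS}(B_t)}$.

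The only step needing care is the measurability hypothesis. The orthogonality argument fails if $\hat B_t$ depends on information outside $\mathcal{F}_t$, for example the labelled calibration sample. In that case I would condition on the calibration data, treating the fitted map as fixed, and apply the argument conditionally. The regret in Proposition~\ref{prop:regret} is also an expectation over the same $\mathcal{F}_t$-randomness, so no further issue arises.
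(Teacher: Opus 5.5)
Your proposal is correct and matches the paper's proof: Proposition~\ref{prop:regret} at the myopic threshold, then Jensen to pass to $\sqrt{\E[(B_t-\hat B_t)^2]}$, then the calibration--refinement decomposition $\mathrm{BS}(\hat B_t)=\mathrm{BS}(B_t)+\E[(B_t-\hat B_t)^2]$, whose cross term vanishes because $\hat B_t$ is $\mathcal{F}_t$-measurable. Your two additions are sound refinements the paper leaves implicit: the radicand is nonnegative, and conditioning on the calibration sample makes the fitted $\hat B_t$ genuinely $\mathcal{F}_t$-measurable (provided that sample is independent of the evaluated trajectory).
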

\begin{proof}
Since $B_t=\E[Y\mid\mathcal{F}_t]$ and $\hat B_t$ is $\mathcal{F}_t$-measurable,
the cross term vanishes in
$\E[(Y-\hat B_t)^2]=\E[(Y-B_t)^2]+\E[(B_t-\hat B_t)^2]$
(the calibration--refinement decomposition), so
$\E[(B_t-\hat B_t)^2]=\mathrm{BS}(\hat B_t)-\mathrm{BS}(B_t)$. Combine with
Proposition~\ref{prop:regret} via
$\E|B_t-\hat B_t|\le\sqrt{\E[(B_t-\hat B_t)^2]}$ (Jensen).
\end{proof}

Corollary~\ref{cor:brier} turns the abstract bound into a training objective:
$\mathrm{BS}(B_t)$ is a fixed property of the signal, so \emph{minimising the
Brier score of the fitted posterior directly minimises the regret bound}. This
is why the protocol in Section~\ref{sec:protocol} reports Brier score as a
primary metric rather than a diagnostic afterthought.

\begin{remark}[ECE is necessary but not sufficient]\label{rem:ece}
The ($L_1$) expected calibration error satisfies
$\mathrm{ECE}(\hat B_t)=\E\big|\E[Y\mid\hat B_t]-\hat B_t\big|
=\E\big|\E[B_t-\hat B_t\mid\hat B_t]\big|\le\E|B_t-\hat B_t|$
by the tower property and Jensen. So ECE \emph{lower-bounds} the quantity that
drives regret: a large ECE certifies a problem, but a small ECE does not certify
safety, because $\hat B_t$ can be perfectly calibrated on average while ignoring
information in $\mathcal{F}_t$ (poor \emph{refinement}). Brier score, which
penalises both calibration and refinement, is the right target;
ECE is the right alarm.
\end{remark}

\subsection{Sample complexity of calibrated self-escalation}\label{sec:sample}

The bounds above take the fitted posterior as given. We now close the loop:
how much labelled calibration data buys how much regret? The following gives a
complete, finite-sample answer for the plug-in estimator on a discretised
signal (discretisation is standard in implementations; $K$ is the number of
bins).

\begin{theorem}[Finite-sample regret of the plug-in policy]\label{thm:sample}
Assume a finite signal alphabet of size $K$ with $f_y(k)\ge\varepsilon$ for all
$k,y$, and a labelled calibration set containing $m_y$ token observations of
class $y$; let $m=\min(m_0,m_1)$ and suppose
$m\ge (2/\varepsilon^2)\log(4K/\delta)$. Let $\hat\lambda$ be the plug-in
log-likelihood ratio of the empirical bin frequencies, with the prior $\pi$
known, and let $\hat B_t$ be the resulting beliefs. Then with probability at
least $1-\delta$ over the calibration set, simultaneously for all $t\le T$,
\begin{equation}
  \E\big[\,|\hat B_t-B_t|\,\big]\;\le\;\frac{t}{\varepsilon}
  \sqrt{\frac{\log(4K/\delta)}{2m}},
  \qquad\text{and hence, at the myopic threshold,}\qquad
  \mathrm{Regret}_t\;\le\;\frac{L\,t}{\varepsilon}
  \sqrt{\frac{\log(4K/\delta)}{2m}}.
\end{equation}
\end{theorem}
\begin{proof}
Hoeffding's inequality gives
$\Prob\big(|\hat f_y(k)-f_y(k)|\ge u\big)\le 2e^{-2m_y u^2}$ per bin and
class; a union bound over the $2K$ pairs with
$u=\sqrt{\log(4K/\delta)/(2m)}$ leaves failure probability at most $\delta$.
On the success event, the hypothesis on $m$ gives $u\le\varepsilon/2$, so
$\hat f_y(k)\ge\varepsilon/2$ and, since $x\mapsto\log x$ is
$(2/\varepsilon)$-Lipschitz on $[\varepsilon/2,\infty)$,
$|\log\hat f_y(k)-\log f_y(k)|\le 2u/\varepsilon$ for each class, hence
$\|\hat\lambda-\lambda\|_\infty\le 4u/\varepsilon$. The log-odds error after
$t$ updates is at most $t\cdot 4u/\varepsilon$ ($\pi$ known), and $\sig$ is
$\tfrac14$-Lipschitz, so $|\hat B_t-B_t|\le tu/\varepsilon$ pointwise on the
success event, hence also in expectation over trajectories. Combining with
Proposition~\ref{prop:regret} at the myopic threshold gives the regret bound.
\end{proof}

Three remarks, in decreasing order of comfort. First, the rate: regret decays
as $O\big(Lt\sqrt{\log K/(nT)}\,/\varepsilon\big)$ when the calibration set
consists of $n$ trajectories of length $T$ (so $m\approx nT\min(\pi,1-\pi)$);
every trajectory contributes $T$ token observations, which is why modest
labelled sets suffice in practice. Second, the linear-in-$t$ compounding is a
worst case of the plug-in construction; recalibrating $\hat B_t$ directly at
each $t$ (e.g.\ isotonic regression per step) targets
$\E|\hat B_t-B_t|$ without the compounding and is what we recommend in
deployment. Third, the $1/\varepsilon$ dependence is pessimistic: it charges
for accuracy on low-mass bins that belief trajectories near the threshold
rarely visit; a margin-weighted refinement is left to future work.
Section~\ref{sec:exp} verifies the theorem's driver empirically: the belief
error decays at the predicted $1/\sqrt{n}$ rate, while the realised cost gap
sits far below the bound.

\section{Algorithm}\label{sec:algo}

Algorithm~\ref{alg:infer} is the streaming inference procedure: a single junior
forward pass, an $O(1)$ belief update per token, and an early exit when the belief
crosses the schedule. Algorithm~\ref{alg:offline} computes the threshold schedule
$\{\tau_t^\star\}$ once, offline, by backward induction on a discretised belief
grid, using the calibrated likelihood ratio to Monte-Carlo the transition.

\paragraph{Context handoff is text, not activations.}
The junior and senior are in general different models---different
architectures, tokenizers, and KV-cache layouts---so the senior cannot ingest the
junior's hidden state. $\textsc{DistillContext}$ therefore produces \emph{text-level}
artifacts: the partial reasoning trace generated so far, any scratchpad or
intermediate results, the tool-call history and their returns, and retrieved
evidence. This is a prompt the senior can consume directly, and it keeps the
handoff model-agnostic. It also means the escalation cost $\gamma$ should include
the tokens re-read by the senior, which the protocol in
Section~\ref{sec:protocol} measures.

\begin{algorithm}[t]
\caption{Bayesian Self-Escalation (inference time)}\label{alg:infer}
\begin{algorithmic}[1]
\Require query $x$; junior $J$; senior $S$; thresholds $\{\tau_t\}$; ratio
$\lambda$; prior log-odds $\ell_0$
\State $\ell \gets \ell_0$
\For{$t=1$ to $T$}
  \State $(\text{token}_t, e_t) \gets J.\textsc{step}(x)$
  \Comment{$e_t$: token entropy / margin / probe output}
  \State $\ell \gets \ell + \lambda(e_t)$; \quad $B \gets \sig(\ell)$
  \Comment{$O(1)$ belief update}
  \If{$B < \tau_t$}
    \State $c \gets \textsc{DistillContext}(\text{trace}_{1:t},\ \text{tool\_state},\ \text{evidence})$
    \State \Return $S.\textsc{solve}(x, c)$
    \Comment{escalate; hand off \emph{text-level} artifacts, not hidden state}
  \EndIf
\EndFor
\State \Return $J.\textsc{finalize}()$
\Comment{answer locally}
\end{algorithmic}
\end{algorithm}

\begin{algorithm}[t]
\caption{Offline threshold schedule (backward induction)}\label{alg:offline}
\begin{algorithmic}[1]
\Require calibrated $\lambda$; costs $(\kappa,\gamma,L,q)$; grid
$\{b_j\}_{j=1}^{m}$; samples $\{e^{(k)}\}$
\State $V(b_j) \gets \min\{\gamma+(1-q)L,\ (1-b_j)L\}$ for all $j$
\Comment{terminal value, Eq.~\eqref{eq:vT}}
\For{$t=T-1$ down to $1$}
  \For{each grid point $b_j$}
    \State propagate $\ell_j=\log\frac{b_j}{1-b_j}$ by $\lambda(e^{(k)})$ under
    both classes; form next beliefs $b'^{(k)}$
    \State $C(b_j) \gets \kappa + b_j\,\overline{V(b'^{(k)}_{Y=1})}
      + (1-b_j)\,\overline{V(b'^{(k)}_{Y=0})}$
    \State $V(b_j) \gets \min\{\gamma+(1-q)L,\; C(b_j)\}$
  \EndFor
  \State $\tau_t^\star \gets \max\{b_j : \gamma+(1-q)L \le C(b_j)\}$
\EndFor
\State \Return $\{\tau_t^\star\}$
\end{algorithmic}
\end{algorithm}

\paragraph{Fitting the competence posterior.}
The likelihood ratio $\lambda$ (or, equivalently, a direct map from signal history
to $B_t$) is fit offline on a labelled development set: run $J$ on queries with
known correctness $Y$, collect signal trajectories $e_{1:T}$, and either
(i) estimate the class-conditional densities $f_0,f_1$ parametrically or by
kernel methods and take their log-ratio, or (ii) fit a logistic model mapping
cumulative signal features (running mean entropy, margin trend, spike counts) to
$\Prob(Y=1)$, which sidesteps density estimation. Either way the resulting
beliefs are then calibrated post hoc (isotonic regression or temperature
scaling), targeting the Brier score per Corollary~\ref{cor:brier}. Reliability
diagrams of $\hat B_t$ against empirical success at several $t$ are the basic
sanity check.

\paragraph{Why a posterior plus thresholds, not a directly learned policy?}
One could instead train a classifier that maps signals straight to
escalate/continue. We prefer the factored design for three reasons. First,
\emph{modularity under changing costs}: the fitted posterior depends only on the
model and signal, while $(\kappa,\gamma,L,q)$ enter only through
Algorithm~\ref{alg:offline}; when prices, latency budgets, or the senior model
change, one reruns a cheap backward induction instead of recollecting labels and
retraining. Second, \emph{auditability}: $\hat B_t$ is an interpretable
monitoring statistic (``the agent currently believes it has a 22\% chance of
being right''), useful for logging and human oversight independent of the
routing decision. Third, \emph{statistical efficiency}: the posterior is learned
from all trajectories, whereas a direct policy gradient sees the cost signal
only at decision boundaries. The price is model misspecification risk in the
belief update, which Section~\ref{sec:exp} probes directly.

\paragraph{Theory versus production instantiation.}
The explicit Bayesian filter is the analyzable idealisation; in production we
expect a \emph{learned success predictor} $\hat B_t=f_\theta(\text{signal
history})$ to replace it while preserving the decision structure. A learned
predictor handles correlated, non-stationary signals that violate
Assumption~\ref{ass:iid} and can ingest hidden-state features directly. The
division of labour under this swap is clean: the regret analysis
(Section~\ref{sec:regret}) is filter-agnostic and becomes the \emph{contract}
the learned predictor must satisfy---minimise Brier score, verify calibration,
then trust the thresholds---while the threshold-structure guarantee
(Theorem~\ref{thm:threshold}) is what is formally lost, since an arbitrary
learned predictor need not inherit the FOSD transition. Pragmatically one
thresholds anyway, computing the schedule by running
Algorithm~\ref{alg:offline} on \emph{empirical} belief transitions from
development trajectories rather than the analytic ones. We caution against
going one step further and learning the escalate/continue policy end-to-end:
that forfeits cost modularity (price changes then require retraining rather
than a cheap backward induction) and the auditability of $\hat B_t$ as a
monitoring statistic.

\paragraph{Overhead and task adaptivity.}
At inference the update is an $O(1)$ table lookup (or tiny MLP evaluation) per
token, negligible next to a transformer forward pass; all expensive work
(fitting, calibration, backward induction) is offline. The framework also adapts
across task types without refitting the signal model: per-domain priors
$\pi_d$ and per-domain costs yield per-domain threshold schedules from the same
fitted $\lambda$, again via Algorithm~\ref{alg:offline} alone.

\section{Simulation Study}\label{sec:exp}

We call this a \emph{simulation study} rather than an experiment, and we are
candid about what it can and cannot show. Because we specify the data-generating
process, this is a world in which the modelling assumptions hold by
construction; it is a check that the \emph{derived policy behaves as the theory
predicts}, and a diagnostic of how it degrades when an assumption is violated. It
is emphatically not evidence about real LLM token dynamics---that is the role of
the protocol in Section~\ref{sec:protocol}. All numbers come from a single
reproducible script (fixed seed).

\paragraph{Setup.}
We draw $Y\sim\mathrm{Bernoulli}(\pi)$ with $\pi=0.60$ and, per token
($T=40$), a signal $e_t\sim\mathrm{Beta}(2,4)$ if $Y=1$ and
$e_t\sim\mathrm{Beta}(4,2)$ if $Y=0$ (the signal enters the update only
through its likelihood ratio, so no shape assumption is needed;
Lemma~\ref{lem:fosd}). The senior succeeds with $q=0.90$. Costs are
$L=1$, $\kappa=0.002$ per token, $\gamma=0.15$. We evaluate on $N=40{,}000$
queries. We compare: \textbf{junior-only}; \textbf{senior-only};
\textbf{fixed-rule} (escalate at the first token with $e_t>\theta$);
\textbf{selective} (generate fully, escalate if final confidence $<\tau$, a
post-hoc baseline that always pays full local generation); \textbf{Bayesian
myopic schedule} (the literal, parameter-free rule of Eq.~\eqref{eq:myopic});
\textbf{Bayesian constant threshold} (escalate at the first token with
$B_t<\tau$ for a constant $\tau$, swept); and \textbf{Bayesian
optimal-stopping} (Algorithm~\ref{alg:offline}, parameter-free). Cost per query
counts junior tokens plus escalation; accuracy is the fraction of correct final
answers. \emph{Disclosure:} the fixed-rule, selective, and constant-threshold
policies each have one free parameter, which is swept and reported at the
compute-matched point; the myopic-schedule and optimal-stopping rows involve no
tuning.

\paragraph{Results.}
Figure~\ref{fig:frontier} shows the cost--accuracy frontier. The Bayesian frontier
dominates both baselines: for any compute budget it attains higher accuracy, and
the optimal-stopping operating point (star) sits above and to the left of
unconditional escalation. Table~\ref{tab:main} reports a matched-compute slice at
$\approx 0.11$ cost/query. The Bayesian policy reaches $96.0\%$ accuracy while
escalating on only $40\%$ of queries, versus $91.0\%$ for the fixed rule and
$90.1\%$ for always escalating to the senior; it also matches the accuracy of the
post-hoc selective baseline at lower cost, because it can abort early rather than
always completing the local generation. That the number nominally exceeds
senior-only accuracy should be given no weight: it is an artifact of the
constant-$q$ assumption (the senior's $q=0.90$ does not degrade with query
difficulty, so keeping the junior's confidently-correct easy cases mechanically
lifts the mixture). With a realistic difficulty-dependent $q(x)$ the effect
shrinks and can vanish. Throughout, the meaningful comparison is the
\emph{ordering and spacing of policies at matched cost}, not any absolute margin
over the senior; Section~\ref{sec:limitations} returns to this.

Two further honest readings of Table~\ref{tab:main}. First, the \emph{literal}
myopic schedule performs poorly: its first-token threshold
$\tau_1^{\mathrm{myo}}=0.828$ exceeds the prior $\pi=0.60$, so it escalates
$62\%$ of queries at the very first token ($68\%$ overall), reaching only
$0.934$ accuracy at \emph{higher} cost ($0.129$)---a concrete demonstration of
the early over-escalation predicted by Remark~\ref{rem:nonmono}. The strong
``Bayesian, constant threshold'' row is a \emph{tuned} rule, not
Proposition~\ref{prop:myopic}; the untuned policy that performs well is the
optimal-stopping schedule. Second, the matched-compute slice flatters the
Bayesian--fixed-rule gap: at slightly higher compute the fixed rule nearly
catches up ($\theta=0.85$ gives $0.956$ accuracy at cost $0.118$). The robust
claim is that the Bayesian frontier weakly dominates everywhere
(Figure~\ref{fig:frontier}); the size of the point gap depends on where the
budget lands on the fixed rule's steep region.

\begin{table}[t]
\centering
\caption{Matched-compute comparison ($\approx 0.11$ cost/query) on the simulation
model. Accuracy is fraction correct; ``esc.'' is escalation rate. Rows marked
``tuned'' sweep one operating parameter and are reported at the compute-matched
point; the myopic-schedule and optimal-stopping rows are parameter-free. Numbers
are produced by the accompanying simulation.}
\label{tab:main}
\begin{tabular}{lccc}
\toprule
Policy & Accuracy & Compute/query & Esc.\ rate \\
\midrule
Junior only              & $0.601$ & $0.080$ & $0.00$ \\
Senior only              & $0.901$ & $0.150$ & $1.00$ \\
Fixed-rule (entropy, tuned)     & $0.910$ & $0.114$ & --- \\
Selective (post-hoc, tuned)     & $0.959$ & $0.140$ & --- \\
Bayesian, myopic schedule (Eq.~\eqref{eq:myopic}) & $0.934$ & $0.129$ & $0.68$ \\
Bayesian, constant threshold (tuned) & $0.958$ & $0.111$ & --- \\
\textbf{Bayesian, optimal-stopping} & $\mathbf{0.960}$ & $\mathbf{0.111}$ & $0.40$ \\
\bottomrule
\end{tabular}
\end{table}

\begin{figure}[t]
\centering
\includegraphics[width=0.72\textwidth]{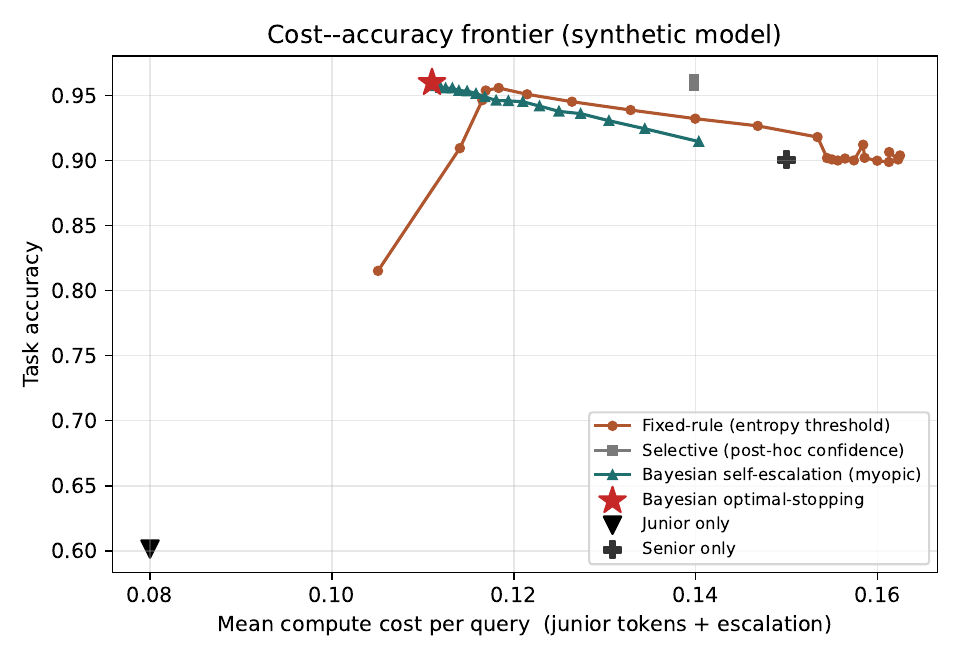}
\caption{Cost--accuracy frontier on the simulation model. The Bayesian policies
(teal) dominate the fixed-rule (orange) and post-hoc selective (grey) baselines;
the optimal-stopping point (red star) beats unconditional escalation on both axes.}
\label{fig:frontier}
\end{figure}

\paragraph{Belief dynamics and the threshold schedule.}
Figure~\ref{fig:beliefs} plots posterior trajectories: beliefs for eventual
successes drift up and for eventual failures drift down, so a threshold cleanly
separates them within a few tokens. The computed schedule illustrates
Remark~\ref{rem:nonmono}'s small-$\kappa$ regime: interior thresholds stay low
(rising gently from $0.02$ to $0.08$ across the generation)---because escalating
one step later costs only $\kappa=0.002$ while buying another observation, the
option value of waiting keeps interior escalation conservative---and the
threshold jumps to the analytic terminal value $q-\gamma/L=0.75$ exactly at the
horizon, where no further information can arrive. The myopic schedule
\eqref{eq:myopic}, by contrast, is \emph{highest} early; in this regime it
over-escalates at the start of generation.

\begin{figure}[t]
\centering
\includegraphics[width=0.66\textwidth]{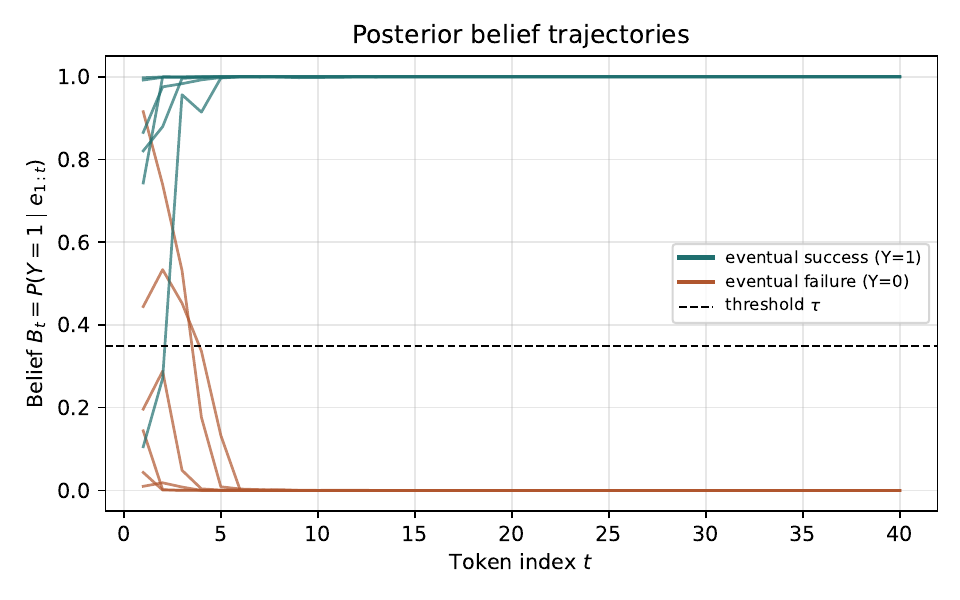}
\caption{Posterior success-belief trajectories $B_t$ for twelve queries. Eventual
successes (teal) separate from eventual failures (orange) within a few tokens; the
dashed line is an illustrative threshold.}
\label{fig:beliefs}
\end{figure}

\paragraph{Calibration sensitivity.}
To probe Proposition~\ref{prop:regret} we contaminate the stream with
``confidently wrong'' queries: a fraction of $Y=0$ cases whose signals are drawn
from the success distribution. Figure~\ref{fig:calib} shows accuracy falling from
$95.2\%$ at $0\%$ contamination to $85.7\%$ at $30\%$, while the escalation rate
\emph{drops} (from $0.47$ to $0.37$) because the contaminated cases look confident
and are wrongly kept local. This is exactly the $O(L)$ regret event of
Proposition~\ref{prop:regret} and underlines that belief calibration, not the
decision rule, is the binding constraint.

\begin{figure}[t]
\centering
\includegraphics[width=0.62\textwidth]{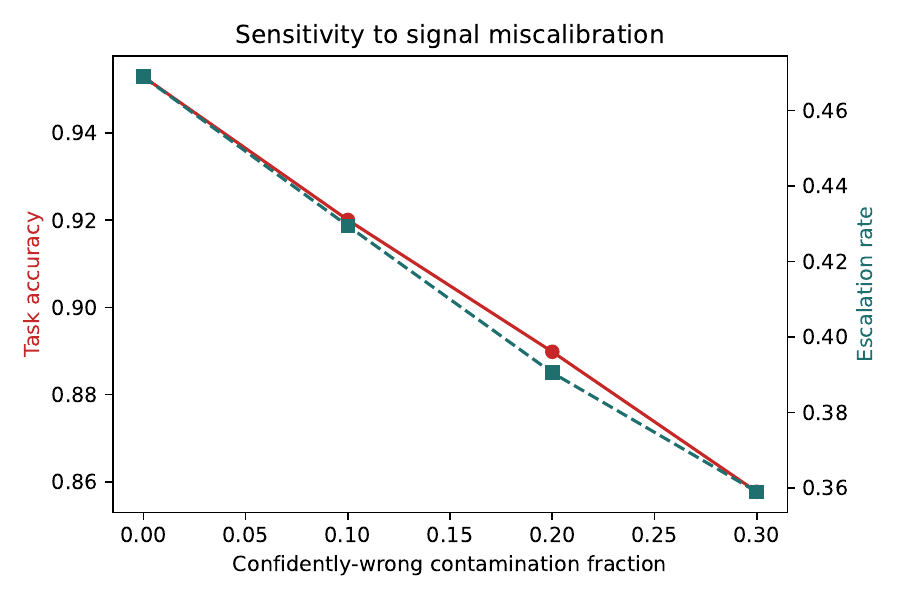}
\caption{Sensitivity to signal miscalibration. As confidently-wrong contamination
grows, accuracy degrades and---counter-productively---the escalation rate falls,
because miscalibrated confidence suppresses the very escalations that are needed.}
\label{fig:calib}
\end{figure}

\paragraph{Sample-complexity check.}
To test Theorem~\ref{thm:sample}'s driver, we discretise the signal into
$K=20$ bins, fit the plug-in $\hat\lambda$ from $n$ labelled trajectories
(add-one smoothing), and compare against the exact binned oracle on a fixed
evaluation set, averaging over $30$ calibration resamples per $n$. The belief
error $\E|\hat B_{10}-B_{10}|$ falls from $2.9\times10^{-3}$ at $n=25$ to
$2.1\times10^{-4}$ at $n=3200$, with log--log slope $-0.53$---matching the
predicted $1/\sqrt{n}$ rate. The realised end-to-end cost gap is already below
$0.0016$ (about $1\%$ of total cost) at $n=25$ and sits far under the bound at
every $n$: with a well-separated signal, beliefs rarely linger near the
threshold, so estimation errors rarely flip decisions. This is the theorem's
pessimism working as intended---the bound is a worst-case guarantee, and the
practical message is that modest labelled sets suffice when the signal is
informative.

\paragraph{Robustness across observation models.}
To check that the conclusions are not an artifact of the Beta observation
family, we repeat the comparison under two further signal models
(Table~\ref{tab:robust}): (i) \emph{Gaussian} class-conditionals with equal
variance, conditionally i.i.d.; and (ii) an \emph{AR(1)-correlated}
Gaussian model with the same marginals but lag-one correlation $\varphi=0.6$,
evaluated while the belief update \emph{still assumes independence}---a
deliberate violation of Assumption~\ref{ass:iid} that makes the update
overconfident (it double-counts correlated evidence). For each model we sweep
each policy's operating parameter and report its best total-cost point. The
Bayesian rule attains lower total cost than the tuned fixed rule in all three
models. Under misspecification its edge narrows but does not invert (total cost
$0.158$ vs.\ $0.166$), and accuracy degrades gracefully ($95.6\%$ vs.\ $95.8\%$
in the matched i.i.d.\ model): correlation costs performance, consistent with
the calibration analysis, but does not break the method.

\begin{table}[t]
\centering
\caption{Robustness across observation models. Each policy is tuned to its best
total-cost operating point per model; total cost $=$ compute $+$ expected error
cost. AR(1) uses the i.i.d.\ belief update on correlated signals
(misspecified). Numbers produced by the accompanying robustness script.}
\label{tab:robust}
\begin{tabular}{llccc}
\toprule
Observation model & Policy & Accuracy & Compute & Total cost \\
\midrule
Beta, i.i.d.\ (baseline) & Fixed-rule & $0.956$ & $0.117$ & $0.161$ \\
                         & Bayesian myopic & $0.960$ & $0.111$ & $\mathbf{0.151}$ \\
\midrule
Gaussian, i.i.d.\        & Fixed-rule & $0.958$ & $0.117$ & $0.160$ \\
                         & Bayesian myopic & $0.958$ & $0.111$ & $\mathbf{0.153}$ \\
\midrule
Gaussian AR(1), $\varphi{=}0.6$ & Fixed-rule & $0.954$ & $0.120$ & $0.166$ \\
(misspecified update)    & Bayesian myopic & $0.956$ & $0.114$ & $\mathbf{0.158}$ \\
\bottomrule
\end{tabular}
\end{table}

\section{Real-Model Validation}\label{sec:real}

The simulation of Section~\ref{sec:exp} verifies the derived policy in a world
where the modelling assumptions hold by construction. Version~1 of this paper deferred any real-model evaluation, reporting only the
pre-registered protocol of Section~\ref{sec:protocol}. This section, added in
Version~1.1, takes a first, deliberately narrow step toward that protocol and
runs the framework on a real hierarchical system. We are candid about scope: this
is a single code-generation cascade, one model pair, greedy decoding, a single
seed, and---importantly---a \emph{constant-threshold} instantiation with a
belief fit from running signal summaries rather than the recursive filter of
Lemma~\ref{lem:update}. It is evidence that the competence signal and the
escalation frontier behave as the theory predicts; it is \emph{not} a test of the
optimal-stopping dynamic program, which we leave to future work
(Section~\ref{sec:limitations}). All numbers come from reproducible scripts with
fixed seeds.

\paragraph{Setup.} The junior is \texttt{Qwen2.5-Coder-1.5B-Instruct} and the
senior \texttt{Qwen2.5-Coder-7B-Instruct}, both served locally with token-level
log-probabilities. We evaluate on the sanitized MBPP test split ($257$ tasks),
defining $Y$ by execution against the task's unit tests---an unambiguous
correctness label. For each greedy junior generation we log per-step token
entropy, next-token log-probability, and top-2 margin. The competence posterior
$B_t$ is a logistic regression on the running (cumulative) means of these three
signals, evaluated with $5$-fold cross-validation so that every reported belief
is out-of-fold. The senior attempts every task once (greedy), giving the
counterfactual needed to price escalation.

\paragraph{Capability gap and escalation ceiling.} The junior solves $62.3\%$ of
tasks and the senior $80.9\%$. Of the $97$ junior failures the senior rescues
$54$ ($55.7\%$); the remaining $43$ ($44.3\%$) are failed by both models and
form an irreducible floor no routing policy can cross. The escalation ceiling---
junior successes plus every failure escalated---is therefore
$214/257=83.3\%$, not the senior's marginal $80.9\%$. This shared-failure floor
is the real-model face of the constant-$q$ caveat in Section~\ref{sec:exp}:
senior reliability is difficulty-correlated, so the achievable gain is bounded
well below perfect rescue.

\paragraph{The competence signal is informative but imperfect.} The
cross-validated posterior attains AUROC $0.758$ against eventual success. The
per-step signal separates the classes early---mean entropy on eventual failures
exceeds that on successes from $\sim\!5\%$ of the generation---but
\emph{non-monotonically}: through the $25$--$55\%$ band the instantaneous gap
collapses and briefly inverts, precisely where confidently-wrong failures
(low-entropy, incorrect) coincide with hard-but-correct successes (high-entropy,
correct). This is the mechanism of Remark~\ref{rem:learned} and
Proposition~\ref{prop:regret} observed directly: raw entropy is not the
posterior, and confident errors are the binding failure mode.

\paragraph{Cumulative belief discrimination rises in $t$ (prediction (b)).}
Although the \emph{instantaneous} signal is non-monotonic, the \emph{cumulative}
posterior $B_t$ is not: its discrimination increases near-monotonically over the
generation (Spearman $\rho=0.93$ between generation fraction and AUROC of $B_t$;
AUROC rising from $0.51$ to $0.76$). Through the middle band where the
instantaneous signal collapses, $B_t$ \emph{plateaus} rather than declining---
confidently-wrong tokens stop contributing fresh evidence, but the belief retains
what it accumulated earlier---and resumes rising thereafter
(Figure~\ref{fig:real-auroc}). This confirms pre-registered prediction (b) of
Section~\ref{sec:protocol} and is direct support for the paper's central design
choice: accumulate a calibrated belief rather than react to per-step uncertainty.
We note the corresponding tension for early action: discrimination peaks at the
horizon ($t{=}T$), so any policy that stops early necessarily acts on a
weaker-than-terminal belief.

\paragraph{The escalation frontier dominates post-hoc routing (prediction (a)).}
We compare a \emph{streaming} policy---escalate at the first step where the
running belief falls below a threshold $\tau$, aborting the remaining junior
generation---against \emph{post-hoc routing}, which runs the junior to completion
and escalates the least-confident fraction (the confidence-cascade family the
protocol names as a baseline). Sweeping each policy's operating parameter traces
the cost--accuracy frontier of Figure~\ref{fig:real-frontier}. Streaming
dominates post-hoc across the frontier: to reach $75\%$ accuracy it uses $30.2\%$
less total compute ($14{,}841$ vs.\ $21{,}273$ generated tokens). More strikingly,
streaming reaches $75\%$ accuracy---$+12.7$ points over the junior alone---at
essentially the junior's own compute ($0.98\times$ junior-only tokens): the
tokens saved by aborting doomed generations offset the senior calls added. At
$\tau{=}0.5$ the policy escalates $37\%$ of tasks, catching them at a mean of
$29\%$ of the way through generation, for $74.7\%$ accuracy. The advantage over
post-hoc is structural: post-hoc pays every junior generation in full before it
can route, whereas streaming stops paying for a generation the moment the belief
turns against it.

\begin{table}[t]
\centering
\caption{Real-model comparison on MBPP (sanitized test, $257$ tasks;
\texttt{Qwen2.5-Coder} $1.5$B$\to$$7$B). Compute is total generated tokens,
normalised to junior-only. Post-hoc and streaming are reported at the operating
point reaching $75\%$ accuracy; the belief is cross-validated. Numbers produced
by the accompanying harvest and analysis scripts.}
\label{tab:real}
\begin{tabular}{lccc}
\toprule
Policy & Accuracy & Compute ($\times$ junior) & Esc.\ rate \\
\midrule
Junior only                       & $0.623$ & $1.00$ & $0.00$ \\
Senior only                       & $0.809$ & ---    & $1.00$ \\
Post-hoc routing (tuned)          & $0.750$ & $1.41$ & --- \\
\textbf{Streaming (tuned $\tau$)} & $0.747$ & $\mathbf{0.98}$ & $0.37$ \\
\bottomrule
\end{tabular}
\end{table}

\begin{figure}[t]
\centering
\includegraphics[width=0.68\textwidth]{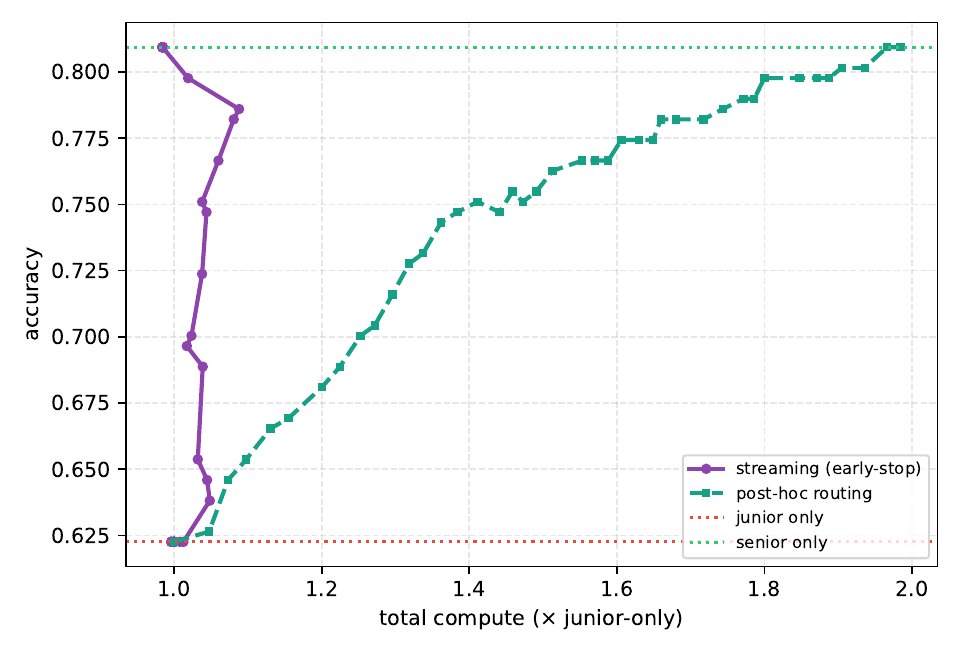}
\caption{Real-model cost--accuracy frontier. Streaming escalation (purple)
dominates post-hoc routing (teal) everywhere: it climbs from junior to senior
accuracy at near-constant compute, because aborting doomed generations offsets
the added senior calls. Dotted lines mark junior-only and senior-only accuracy.}
\label{fig:real-frontier}
\end{figure}

\begin{figure}[t]
\centering
\includegraphics[width=0.62\textwidth]{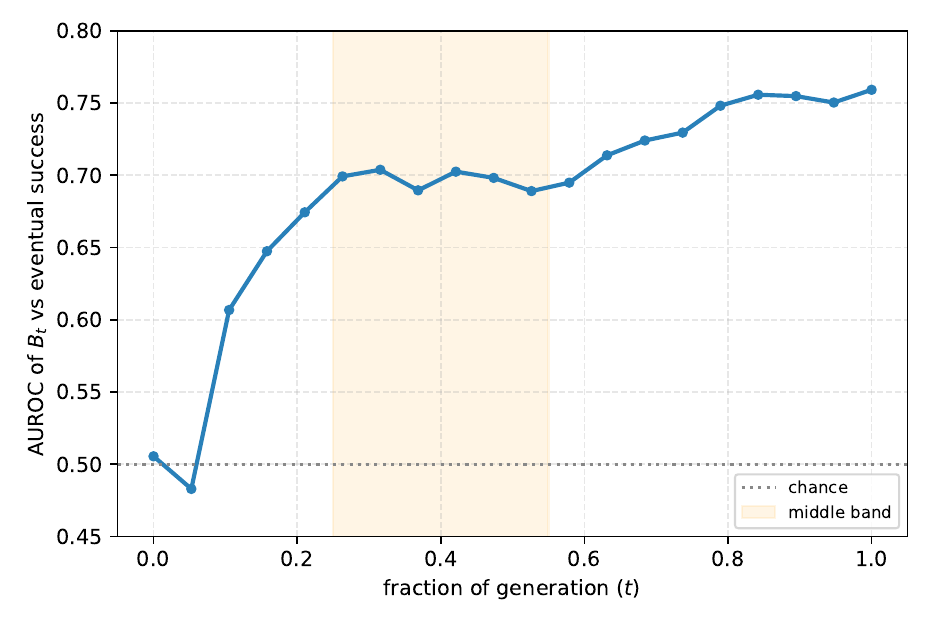}
\caption{Discrimination of the cumulative belief $B_t$ versus generation fraction
(AUROC of $B_t$ against eventual success, cross-validated). The curve rises
near-monotonically (Spearman $\rho=0.93$) and \emph{plateaus}---rather than
declining---through the shaded $25$--$55\%$ band where the instantaneous signal
collapses, confirming prediction (b).}
\label{fig:real-auroc}
\end{figure}

\paragraph{What this does and does not establish.} Two of the three pre-registered
predictions are supported on real data: the Bayesian frontier dominates the
post-hoc baseline at equal cost (a), and the cumulative belief's AUROC rises in
$t$ (b). Prediction (c)---that the accuracy gap over baselines shrinks as
calibration error grows---we do not test here; the simulation's calibration-
sensitivity study (Figure~\ref{fig:calib}) is its controlled analogue. Three
limits bound the reading. First, the policy evaluated is a \emph{constant}
threshold, not the optimal-stopping schedule of Eqs.~\eqref{eq:vT}--\eqref{eq:bellman};
by Theorem~\ref{thm:threshold} a threshold rule is the right \emph{form}, but the
option-value machinery---the contribution over classical sequential testing---is
not exercised, and the empirical margin of the dynamic program over the best
constant threshold remains to be measured. Second, the belief is a running-mean
proxy for the recursive filter of Lemma~\ref{lem:update}, as anticipated in
Section~\ref{sec:algo}. Third, this is one benchmark, one model pair, greedy
decoding, and a single seed; the protocol's reasoning and commonsense datasets,
and the sampling-based baselines, remain open. Within those limits, the framework
transfers: the signal is informative, the cumulative belief behaves as predicted,
and confidence-routed escalation is markedly more compute-efficient than routing
after the fact.

\section{Protocol for Real LLM Systems}\label{sec:protocol}

We pre-registered the full evaluation below to make the empirical test
falsifiable; it was specified in Version~1 of this paper as a plan, before any
real-model run. Section~\ref{sec:real}, added in Version~1.1, reports its first
execution on a code-generation cascade---keeping prediction and test separated in
time---and finds two of the three predictions confirmed.

\paragraph{Systems.} Junior: a small instruction/coding model served with
token-level logit access (so that entropy and next-token margin are available at
each step). Senior: a substantially stronger reasoning/coding model. Both served
through an inference stack that exposes per-token log-probabilities.

\paragraph{Signals.} Per-step $e_t$ candidates: token entropy, next-token
probability margin, and a single-pass semantic-entropy probe
\citep{kossen2024sep}. Each is calibrated separately so that comparisons isolate
the signal's quality.

\paragraph{Belief fitting and calibration.} On a labelled development split, run
the junior, record $(e_{1:T},Y)$, fit $\lambda$, and calibrate $B_t$ by isotonic
regression. Report reliability diagrams and expected calibration error (ECE) for
$B_t$ at several $t$.

\paragraph{Baselines.} (i) junior-only; (ii) senior-only; (iii) query-level router
\citep{ong2024routellm}; (iv) confidence cascade
\citep{jitkrittum2023confidence}; (v) sampling-based semantic-entropy deferral
\citep{farquhar2024semantic}; (vi) our myopic and optimal-stopping policies.

\paragraph{Datasets.} A reasoning set (e.g.\ multi-hop QA), a commonsense set with
natural easy/hard structure, and a code set with executable unit tests to define
$Y$ unambiguously; plus an in-domain deployment set.

\paragraph{Metrics.} Cost--accuracy Pareto frontier (primary); escalation
precision/recall; calibration (ECE, Brier, AUROC of $B_t$ vs.\ $Y$); and
decision/end-to-end latency, including the cost of computing $e_t$.

\paragraph{Falsifiable predictions.} If the framework transfers, then (a) the
Bayesian policy's frontier should dominate the query-level router and the
confidence cascade at equal cost; (b) AUROC of $B_t$ should rise monotonically in
$t$; and (c) the accuracy gap over baselines should shrink as calibration error
grows, per Proposition~\ref{prop:regret}. Failure of (a)--(c) would falsify the
central claims.

\section{Limitations}\label{sec:limitations}

\paragraph{Modelling assumptions.} Assumption~\ref{ass:iid} (conditional-i.i.d.\
signals) is false for real token streams, which are correlated and
non-stationary; the update then becomes an approximation and $\lambda$ should be
replaced by a sequence model of the signal. (No shape assumption on the
signal densities is needed for the threshold structure itself;
Lemma~\ref{lem:fosd} holds for any density pair.)

\paragraph{Confidently-wrong predictions.} As Proposition~\ref{prop:regret} and
Figure~\ref{fig:calib} show, the method inherits the calibration of its signal.
Where the junior is confidently wrong, no threshold policy on that signal can
help; combining epistemic signals with lightweight cross-model agreement
\citep{kolawole2024agreement} is a natural remedy.

\paragraph{Logit access.} Token-level signals require an inference stack that
exposes log-probabilities; many managed APIs return them only after generation, or
not at all, restricting deployment to self-hosted or logprob-exposing endpoints.

\paragraph{Senior independence.} We treat $q$ as constant; in practice senior
success correlates with query difficulty, and a difficulty-conditioned $q(x)$
would tighten the decision.

\paragraph{Scope of the real-model study.} Version~1 flagged the absence of
real-model results as its central limitation; Version~1.1 partially closes it.
Section~\ref{sec:real} validates the
framework on a single code-generation cascade, one model pair, greedy decoding,
and a single seed. Three limits bound it: the policy evaluated is a
\emph{constant} threshold, not the optimal-stopping schedule of
Eqs.~\eqref{eq:vT}--\eqref{eq:bellman}, so the option-value contribution over
classical sequential testing is not yet exercised empirically; the belief is a
running-mean proxy for the recursive filter of Lemma~\ref{lem:update}; and
prediction (c) of Section~\ref{sec:protocol} remains untested on real models.
The remaining datasets and baselines of the protocol are open.

\section{Conclusion}

We framed the question of \emph{when an agent should ask for help} as Bayesian
self-escalation: a junior model tracks an online posterior over its own eventual
success from the uncertainty signals it emits and defers to a stronger model when
the expected utility of deferral wins. The framework yields a closed-form myopic
threshold, an optimal-stopping characterisation with a proven threshold structure,
and a regret bound that pins the method's success to belief calibration. A
controlled simulation study confirms the predicted behaviour, including the
myopic--optimal threshold gap and the calibration-driven failure mode. The
decisive next step is the real-system protocol of Section~\ref{sec:protocol};
whether token-level signals on current LLMs are calibrated enough to realise these
gains is, in the end, an empirical question this paper is designed to make
testable.

\paragraph{Reproducibility.} The simulation study (Section~\ref{sec:exp}) is
generated by three self-contained scripts (main simulation, robustness study, and
sample-complexity experiment); the real-model validation (Section~\ref{sec:real})
by a harvest script and three post-hoc analysis scripts (separation and frontier,
streaming escalation, and the $B_t$ AUROC test). All are released with this paper.
The paper and simulation scripts are permanently archived at
\href{https://doi.org/10.5281/zenodo.21330787}{DOI: 10.5281/zenodo.21330787};
the full code, including the real-model pipeline, is at
\href{https://github.com/nadeem-shaikh/llm-self-escalation}{\texttt{github.com/nadeem-shaikh/llm-self-escalation}}.


\begin{thebibliography}{99}
\small
\bibitem[Chen et al.(2023)]{chen2023frugalgpt}
L.~Chen, M.~Zaharia, and J.~Zou.
\newblock FrugalGPT: How to use large language models while reducing cost and
improving performance.
\newblock \emph{arXiv:2305.05176}, 2023.

\bibitem[Chow(1970)]{chow1970}
C.~K.~Chow.
\newblock On optimum recognition error and reject tradeoff.
\newblock \emph{IEEE Trans.\ Information Theory}, 16(1):41--46, 1970.

\bibitem[Farquhar et al.(2024)]{farquhar2024semantic}
S.~Farquhar, J.~Kossen, L.~Kuhn, and Y.~Gal.
\newblock Detecting hallucinations in large language models using semantic
entropy.
\newblock \emph{Nature}, 630:625--630, 2024.

\bibitem[Ferguson(2006)]{ferguson2006optimal}
T.~S.~Ferguson.
\newblock \emph{Optimal Stopping and Applications}.
\newblock Electronic text, UCLA, 2006.

\bibitem[Geifman \& El-Yaniv(2017)]{geifman2017selective}
Y.~Geifman and R.~El-Yaniv.
\newblock Selective classification for deep neural networks.
\newblock In \emph{NeurIPS}, 2017.

\bibitem[Geifman \& El-Yaniv(2019)]{geifman2019selectivenet}
Y.~Geifman and R.~El-Yaniv.
\newblock SelectiveNet: A deep neural network with an integrated reject option.
\newblock In \emph{ICML}, 2019.

\bibitem[Graves(2016)]{graves2016act}
A.~Graves.
\newblock Adaptive computation time for recurrent neural networks.
\newblock \emph{arXiv:1603.08983}, 2016.

\bibitem[Jitkrittum et al.(2023)]{jitkrittum2023confidence}
W.~Jitkrittum, N.~Gupta, A.~K.~Menon, H.~Narasimhan, A.~Rawat, and S.~Kumar.
\newblock When does confidence-based cascade deferral suffice?
\newblock In \emph{NeurIPS}, 2023.

\bibitem[Kadavath et al.(2022)]{kadavath2022know}
S.~Kadavath, T.~Conerly, A.~Askell, et al.
\newblock Language models (mostly) know what they know.
\newblock \emph{arXiv:2207.05221}, 2022.

\bibitem[Kolawole et al.(2024)]{kolawole2024agreement}
S.~Kolawole, D.~Dennis, A.~Talwalkar, and V.~Smith.
\newblock Agreement-based cascading for efficient inference.
\newblock \emph{Transactions on Machine Learning Research}, 2025.

\bibitem[Kossen et al.(2024)]{kossen2024sep}
J.~Kossen, J.~Han, M.~Razzak, L.~Schut, S.~Malik, and Y.~Gal.
\newblock Semantic entropy probes: Robust and cheap hallucination detection in
LLMs.
\newblock \emph{arXiv:2406.15927}, 2024.

\bibitem[Krishnamurthy(2016)]{krishnamurthy2016pomdp}
V.~Krishnamurthy.
\newblock \emph{Partially Observed Markov Decision Processes: From Filtering to
Controlled Sensing}.
\newblock Cambridge University Press, 2016.

\bibitem[Kuhn et al.(2023)]{kuhn2023semantic}
L.~Kuhn, Y.~Gal, and S.~Farquhar.
\newblock Semantic uncertainty: Linguistic invariances for uncertainty estimation
in natural language generation.
\newblock In \emph{ICLR}, 2023.

\bibitem[Leviathan et al.(2023)]{leviathan2023speculative}
Y.~Leviathan, M.~Kalman, and Y.~Matias.
\newblock Fast inference from transformers via speculative decoding.
\newblock In \emph{ICML}, 2023.

\bibitem[Lovejoy(1987)]{lovejoy1987monotonicity}
W.~S.~Lovejoy.
\newblock Some monotonicity results for partially observed Markov decision
processes.
\newblock \emph{Operations Research}, 35(5):736--743, 1987.

\bibitem[Madaan et al.(2023)]{madaan2023selfrefine}
A.~Madaan, N.~Tandon, P.~Gupta, et al.
\newblock Self-Refine: Iterative refinement with self-feedback.
\newblock In \emph{NeurIPS}, 2023.

\bibitem[Madras et al.(2018)]{madras2018predict}
D.~Madras, T.~Pitassi, and R.~Zemel.
\newblock Predict responsibly: Improving fairness and accuracy by learning to
defer.
\newblock In \emph{NeurIPS}, 2018.

\bibitem[Mozannar \& Sontag(2020)]{mozannar2020consistent}
H.~Mozannar and D.~Sontag.
\newblock Consistent estimators for learning to defer to an expert.
\newblock In \emph{ICML}, 2020.

\bibitem[Ong et al.(2024)]{ong2024routellm}
I.~Ong, A.~Almahairi, V.~Wu, W.-L.~Chiang, T.~Wu, J.~E.~Gonzalez, M.~W.~Kadous,
and I.~Stoica.
\newblock RouteLLM: Learning to route LLMs with preference data.
\newblock \emph{arXiv:2406.18665}, 2024.

\bibitem[Peskir \& Shiryaev(2006)]{peskir2006optimal}
G.~Peskir and A.~Shiryaev.
\newblock \emph{Optimal Stopping and Free-Boundary Problems}.
\newblock Birkh\"auser, 2006.

\bibitem[Schuster et al.(2022)]{schuster2022calm}
T.~Schuster, A.~Fisch, J.~Gupta, M.~Dehghani, D.~Bahri, V.~Q.~Tran, Y.~Tay, and
D.~Metzler.
\newblock Confident adaptive language modeling.
\newblock In \emph{NeurIPS}, 2022.

\bibitem[Snell et al.(2024)]{snell2024testtime}
C.~Snell, J.~Lee, K.~Xu, and A.~Kumar.
\newblock Scaling LLM test-time compute optimally can be more effective than
scaling model parameters.
\newblock \emph{arXiv:2408.03314}, 2024.

\bibitem[Soiffer et al.(2025)]{soiffer2025semantic}
D.~Soiffer, S.~Kolawole, and V.~Smith.
\newblock Semantic agreement enables efficient open-ended LLM cascades.
\newblock In \emph{EMNLP (Industry Track)}, 2025.

\bibitem[Verma \& Nalisnick(2022)]{verma2022calibrated}
R.~Verma and E.~Nalisnick.
\newblock Calibrated learning to defer with one-vs-all classifiers.
\newblock In \emph{ICML}, 2022.
\bibitem[Wald(1945)]{wald1945}
A.~Wald.
\newblock Sequential tests of statistical hypotheses.
\newblock \emph{Annals of Mathematical Statistics}, 16(2):117--186, 1945.

\bibitem[Wald \& Wolfowitz(1948)]{waldwolfowitz1948}
A.~Wald and J.~Wolfowitz.
\newblock Optimum character of the sequential probability ratio test.
\newblock \emph{Annals of Mathematical Statistics}, 19(3):326--339, 1948.

\end{thebibliography}
\end{document}